\definecolor{SchoolColor}{rgb}{0.6471, 0.1098, 0.1882} 
\definecolor{forestgreen}{rgb}{0.0, 0.27, 0.13}
\theoremstyle{plain}
\newtheorem{theorem}{Theorem}[section]
\newtheorem{proposition}[theorem]{Proposition}
\newtheorem{lemma}[theorem]{Lemma}
\theoremstyle{definition}
\newtheorem{definition}[theorem]{Definition}
\theoremstyle{remark}
\title{Randomized Geometric Algebra Methods for \\Convex Neural Networks}
\author{%
  Yifei Wang, Sungyoon Kim, Paul Chu, Indu Subramaniam, Mert Pilanci \\
  Department of Electrical Engineering\\
  Stanford University\\
  Stanford, CA 94305 \\
  \texttt{\{wangyf18, sykim777, chupaul, indu22, pilanci\}@stanford.edu} \\
}
\date{}
\begin{document}

\maketitle

\begin{abstract}

We introduce randomized algorithms to Clifford's Geometric Algebra, generalizing randomized linear algebra to hypercomplex vector spaces. This novel approach has many implications in machine learning, including training neural networks to global optimality via convex optimization. Additionally, we consider fine-tuning large language model (LLM) embeddings as a key application area, exploring the intersection of geometric algebra and modern AI techniques. In particular, we conduct a comparative analysis of the robustness of transfer learning via embeddings, such as OpenAI GPT models and BERT, using traditional methods versus our novel approach based on convex optimization. We test our convex optimization transfer learning method across a variety of case studies, employing different embeddings (GPT-4 and BERT embeddings) and different text classification datasets (IMDb, Amazon Polarity Dataset, and GLUE) with a range of hyperparameter settings. Our results demonstrate that convex optimization and geometric algebra not only enhances the performance of LLMs but also offers a more stable and reliable method of transfer learning via embeddings.

\end{abstract}

\noindent \textbf{Keywords:} Large Language Models, Convex Optimization, Geometric Algebra, Randomized Algorithms

\section{Introduction}
In this paper, we propose randomized algorithms for Clifford (Geometric) Algebra and investigate applications in feature-based transfer learning via convex optimization. The approach is based on an observation from \cite{pilanci2023complexity}, showing that we can exactly characterize optimal weights of a neural network with ``generalized cross-products" of training data points. Moreover, it is known that two-layer neural networks have equivalent convex reformulations \cite{pilanci2020neural}, and we can train two-layer neural networks using convex optimization. The two facts imply that we can find the optimal parameters of a two-layer neural network very efficiently, as we know both the closed form of the parameters and a convex reformulation of it. One caveat is that for high-dimensional data, it is computationally inefficient to calculate the whole generalized cross-product, as it would need solving a large linear system. To mitigate the issue, we present a novel algorithm that involves randomized embeddings of the dataset to calculate the generalized cross-product more efficiently. For details, see \cref{sec:Theory}.

Transferring a language model to specific tasks has been a prominent approach to solving tasks in the field of NLP, especially after the remarkable success of unsupervised pre-trained large language models(LLMs), e.g. BERT \cite{devlin2018bert}. One approach in the literature is feature-based transfer learning. In this particular approach, we do not train the whole language model again to transfer it to downstream tasks: rather, we freeze the model and use the intermediate and last layer information as ``embeddings" and train a simple model that exploits knowledge from the embeddings. These embeddings, analogous to Word2Vec \cite{church2017word2vec}, are expected to have extracted useful information about the language during the pretraining phase, and can be used to solve various NLP tasks such as text classification, retrieval problems, question answering, etc. Using fixed embeddings is especially favorable when we don't have enough computational resources to execute the whole model again, as during the process of finetuning we have to at least calculate a forward pass of a LLM. Also, it is favorable when we don't have direct access to the language model itself, which is the case for various commercial LLMs such as GPT. 

We apply our proposed method in the field of text classification via embedding based transfer learning. Across various benchmarks, we show that the proposed method has multiple benefits compared to training with the state-of-the art optimizer AdamW: (i) the method is much faster than existing optimization algorithms that use AdamW, (ii) the method reaches better training accuracy for various tasks, (iii) the method can achieve best test accuracy for some datasets, and (iv) the method is more robust to initialization or hyperparameters. The results show that the proposed method can be an attractive substitute for existing local optimization algorithms, especially when training simple models with low computation.

The paper is organized as follows: in Section 2, we discuss relevant backgrounds on the topic, such as text classification, transfer learning, and learning with geometric algebra. In Section 3, we introduce the theory behind the proposed method, mainly convex reformulation of neural networks and algorithms using generalized cross-products. In Section 4, we show the application of our method to various text classification tasks.

\section{Related works}

\subsection{Feature-based transfer learning for NLP}

Exploiting pre-trained features has been a classical approach for solving NLP tasks, and there is an extensive line of work to obtain meaningful features for language models that can be universally used for downstream tasks. Some representative approaches are \cite{bengio2000neural}, where they use a mapping of each word to a continuous embedding space to train an n-gram model, word2vec \cite{church2017word2vec}, ELMo \cite{peters-etal-2018-deep}, effective sentence embeddings \cite{arora2017simple, reimers2019sentence}, and contrastive learning approaches \cite{gao2021simcse, chuang2022diffcse, kim2021self}. 

In general, freezing the source model and using the outputs for downstream tasks is referred to as linear probing \cite{alain2016understanding}. While the original concept of probing is training a simple model(such as a linear model) on specifically designed tasks to understand the role of intermediate layers, it is also used as a term to denote a training scheme that freezes the model and uses simple models to further train on downstream tasks \cite{evci2022head2toe}. While the performance of linear probing is not as good as finetuning the whole model, it can be considered as an alternative approach when we don't have enough computation power or access to the model itself. 

With recent advances in language models \cite{devlin2018bert}, several works have attempted to use the intermediate/final outputs of LLMs to train a network for downstream tasks. The two major applications are: generating high-fidelity images or audio from a given text prompt \cite{chang2023muse, koh2023generating, ghosal2023text}, using the embeddings for dense retrieval or search algorithms \cite{peng2023gpt, lin2023vector, muennighoff2022sgpt}, and utilizing the embeddings to guide RL agents \cite{du2023guiding}.  

\subsection{Geometric algebra and language modeling}
%


In neural networks, geometric algebra is typically applied by substituting traditional input vectors and linear maps with multivectors from Geometric Algebra (GA) \cite{bayro2001geometric}. These hypercomplex algebras are capable of capturing symmetries effectively \cite{ruhe2023geometric, chang2023muse, brehmer2024geometric} and has natural hierarchies \cite{mani2023representing,ruhe2023geometric}, using them as an alternative to existing neural networks have certain benefits. Most recently, several lines of work \citep{brehmer2024geometric, de2024euclidean} utilized geometric algebra to build a transformer architecture for geometric data with symmetries. 

In the field of NLP, \cite{mani2023representing} uses GA to generate word embeddings. They claim that using them as word embeddings can lead to better performance due to the natural hierarchy of multivectors in GA and its complexity compared to simple vector algebra. 

\section{Neural network training via Geometric Algebra}
\label{sec:Theory}
\subsection{Convex reformulation of neural networks}

Suppose that $X\in\mbR^{n\times d}$ is the training data matrix and $y\in\mbR^n$ is the label vector. We primarily focus on the two-layer neural network architectures with ReLU activation given as
$$
f_{\theta,b}^\text{ReLU}(x) = (x^TW_1+b^T)_+w_2=\sum_{i=1}^m (x^Tw_{1,i}+b_i)_+w_{2,i},\quad \theta=(W_1,w_2), 
$$ where $W_1\in\mbR^{d\times m}, w_2\in\mbR^m$ are trainable weights, $b\in\mbR^m$ is the bias vector and the gated ReLU activation
$$
f_{\theta,b}^\text{ReLU}(x) = \sum_{i=1}^m (x^Tw_{1,i}+b_i)1[x^Th_i+c_i\geq 0]w_{2,i},\quad \theta=(W_1,w_2,H,c)
$$

with $W_1,H\in\mbR^{d\times m}, w_2,c\in\mbR^m$ are trainable weights, $b\in\mbR^m$ is the bias vector.

We also consider the $p$-norm based regularization of the network weights
$$
\mathcal{R}_p(\theta) = \frac{1}{2} (\lVert W_1 \rVert_p^2 + \lVert w_2 \rVert_p^2).
$$
For simplicity, we first consider the neural networks with bias-free neuron, i.e., $b=0$. We also write $f_{\theta}(X)=:f_{\theta,0}(X)$. Consider the following training problem of a two-layer neural network with ReLU activation:
\begin{equation}
    \min_{\theta} \ell(f^\text{ReLU}_{\theta}(X),y)+\beta\mathcal{R}_2(\theta),
\end{equation}
where $\beta>0$ is a regularization parameter and $l(\cdot,y)$ is a convex loss function. The convex optimization form of the above problem writes
\begin{equation}\label{cvxnn:relu}
\begin{aligned}
    \min_{\{(u_i,u_i')\}_{i=1}^p}&\; \ell\pp{\sum_{i \in \mathcal{I}}D_iX(u_i-u_i'),y}+\beta\sum_{i=1}^p(\|u_i\|_2+\|u_i'\|_2)\\
    \text{ s.t. }&(2D_i-I)Xu_i\geq 0, (2D_i-I)Xu_i'\geq 0.
\end{aligned}
\end{equation}
for some index set $\mathcal{I}$. Here $D_1,\dots,D_p$ are enumeration of all possible hyperplane arrangements $\{\diag(\mbI(Xw\geq 0))|, w\in\mbR^d\}$. When $\mathcal{I} = [p]$, we obtain the exact convex reformulation of the ReLU training problem, and solving the reformulation leads to solving the original nonconvex problem \cite{pilanci2020neural}. We may also consider the unconstrained version of the convex reformulation \cref{cvxnn:grelu}, which is equivalent to the gated ReLU activation \cite{mishkin2022fast}.
\begin{equation}\label{cvxnn:grelu}
\begin{aligned}
    \min_{\{(u_i)\}_{i=1}^p}&\; \ell\pp{\sum_{i=1}^pD_iXu_i,y}+\beta\sum_{i=1}^p\|u_i\|_2.
\end{aligned}
\end{equation}
An important feature of the convex optimization formulation \ref{cvxnn:grelu} is that it is a group Lasso problem, which can be efficiently solved by gradient-based algorithms like FISTA \cite{beck2009fast}. 

In practice, it is impossible to enumerate all possible hyperplane arrangements when $d$ is even moderately large. Instead, we subsample a subset of valid hyperplane arrangements. In this literature, it is common to use Gaussian sampling, where the patterns are given as $\bar D_i = \diag(\mbI(Xv_i\geq 0))$, and
$v_1,\dots,v_k$ are i.i.d. random vectors following $\mcN(0,I)$. Then, we solve the convex optimization problem with subsampled hyperplane arrangements.


\begin{algorithm}[!tp]
\caption{Convex neural network training via Gaussian sampling}
\label{alg:Gaussian}
\begin{algorithmic}[1]
\REQUIRE Number of hyperplane arrangement samples $k$, regularization parameter $\beta>0$.
\STATE Sample $k$ i.i.d. random vectors $v_1,\dots,v_k$ following $\mcN(0,I)$.
\STATE Compute $\bar D_i = \diag(\mbI(Xv_i\geq 0))$ for $i\in[k]$.
\STATE Solve the convex optimization problem \eqref{cvxnn:relu} with the subsampled patterns.
\end{algorithmic}
\end{algorithm}

A natural question is that whether there exists a more efficient way to sample the hyperplane arrangements.

\subsection{Geometric Algebra and neural networks}\label{sec:gann}
Clifford's Geometric Algebra is a mathematical framework that enables geometric objects of different dimensions to be expressed in a unified manner \citep{artin2016geometric}. Within its vast application in many different domains \citep{doran2003geometric,dorst2012applications}, we focus mainly on the optimal weights of a neural network and their functionality in the lens of geometric algebra. 

The geometric algebra over a $d$-dimensional Euclidean space is denoted as $\mbG^d$. Each element $M\in \mbG^d$ is a multivector which can be represented by
$$
M=\lra{M}_0+\lra{M}_1+\dots+\lra{M}_d.
$$
Here $\lra{M}_k$ denotes the $k$-vector part of $M$. A $k$-blade $M=\alpha_1\wedge \dots\wedge \alpha_k$ is a $k$-vector that can be expressed as the wedge product of $k$ vectors $\alpha_1,\dots,\alpha_k\in\mbR^d$. It can be viewed as a $k$-dimensional oriented parallelogram. For instance, $a\wedge b$ is a $2$-blade, which represents the signed area of the paralleogram spanned by $a$ and $b$. We can define the inner product between two $k$-blades $M=\alpha_1\wedge \dots\wedge \alpha_k$ and $N=\beta_1\wedge \dots\wedge \beta_k$ by $$
M\cdot N=|\{\alpha_i^T\beta_j\}_{i,j=1}^k|.
$$
where $|A|$ is the determinant of matrix $A$ and $\{\alpha_i^T\beta_j\}_{i,j=1}^k$ is a $k\times k$ matrix whose element at the $i$-th row and $j$-th column is defined by $\alpha_i^T\beta_j$. 
For each parallelogram represented by a $k$-blade, we can assign $(d-k)$-dimensional orthogonal complement. To be specific, for every pair of $k$-vectors $M,N\in\mbG^d$, there exists a unique $(d-k)$-vector $\star M\in\mbG^d$ such that
$$
\star M\wedge N = (M\cdot N) e_1\wedge\dots\wedge e_d = (M\cdot N) \mfI,
$$
where $\{e_i\}_{i=1}^d$ is the standard basis of $\mbR^d$ and $\mfI=: e_1\wedge \cdots \wedge e_d = e_1\cdots e_d$ stands for the unit pseudoscalar. This linear transform from $k$-vectors to $(d-k)$-vectors defined by $M\to \star M$ is the Hodge star operation, which satisfies $\star M = M \mfI^{-1}= M e_d\cdots d_1$. Based on the Hodge star operation, we can define the generalized cross-product in $\mbR^d$. It takes $d-1$ vectors $x_1,\dots,x_{d-1}$ and forms a vector which is orthogonal to all of them: 
$$
\times(x_1,\dots,x_{d-1}) \triangleq \star(x_1\wedge \dots \wedge x_{d-1}).
$$
To be precise, the generalized cross-product can be calculated as follows.
\begin{definition}
Let $x_1,\dots,x_{d-1}\in\mbR^d$ be a set of $d-1$ vectors and denote $A=\bmbm{x_1&\dots&x_{d-1}}$ as the matrix whose columns are the vectors $\{x_i\}_{i=1}^{d-1}$. The generalized cross-product of $\{x_i\}_{i=1}^{d-1}$ is defined as
\begin{equation}
\begin{aligned}
   \times(x_1,\dots,x_{d-1})    \triangleq &\sum_{i=1}(-1)^{i-1}|A_i|e_i,
\end{aligned}
\end{equation}
where $|A_i|$ is the determinant of the square matrix $A_i$, $A_i$ is the square matrix obtained from $A$ by deleting its $i$-th row.
\end{definition}
The cross product and the wedge product are related via the formula
\begin{equation}\label{equ:wedge}
    x^T\times(x_1,\dots,x_{d-1})=  
    \textbf{Vol}(\mcP(x,x_1,\dots,x_{d-1}))=
    (x\wedge x_1 \wedge \dots \wedge x_{d-1})\mathbf{I}^{-1},
\end{equation}
where $\mcP(x,x_1,\dots,x_{d-1})$ is the parallelotope spanned by vectors $\{x,x_1,\dots,x_{d-1}\}$, whose volume is given by the determinant $\mathbf{det}[x,x_1,...,x_d]$.

\cite{pilanci2023complexity} provides a geometric algebraic perspective on understanding how optimal weights are represented by the training data. Consider the following training problem of a two-layer ReLU neural network with $\ell_1$ regularization. 
\begin{equation}\label{train:noncvx}
    \min_{\theta} \ell\pp{f^\text{ReLU}_{\theta}(X),y}+\beta\mathcal{R}_1(\theta),
\end{equation}
where $\beta>0$ is a regularization parameter. The above problem is equivalent to the following convex Lasso problem
\begin{equation}\label{cvxnn:lasso}
    \min_{z}\ell\pp{Kz,y} + \beta \|z\|_1,
\end{equation}
where the dictionary matrix $K$ is defined by $K_{i,j}=\kappa(x_i,x_{j_1},\dots,x_{j_{d-1}})$ for a multi-index $j=(j_1,\dots,j_{d-1})$ which enumerates over all combinations of $d-1$ rows of $X\in\mbR^{n\times d}$  and
$$
\kappa(x,u_1,\dots,u_{d-1}) = \frac{\pp{x^T\times(u_1,\dots,u_{d-1})}_+}{\|\times(u_1,\dots,u_{d-1})\|_2}=\frac{\pp{\textbf{Vol}(\mcP(x,u_1,\dots,u_{d-1}))}_+}{\|\times(u_1,\dots,u_{d-1})\|_2}.
$$ 
Here we utilize the equation \eqref{equ:wedge}. From an optimal solution $z^*$ to \eqref{cvxnn:lasso}, an optimal ReLU neural network can be constructed as follows:
$$
f^\text{ReLU}_{\theta^*}(x)=\sum_{j=(j_1,\dots,j_{d-1})} z_j^*\kappa(x,x_{j_1},\dots,x_{j_{d-1}}).
$$
In other words, the optimal weights in \eqref{train:noncvx} can be found via a closed-form formula $\times(x_{j_1},\dots,x_{j_{d-1}})$, where $\{x_{j_i}\}_{i=1}^{d-1}$ is a subset of training data indexed by $j_1,\dots,j_{d-1}$ and $\times(x_{j_1},\dots,x_{j_{d-1}})$ is the generalized cross-product of $\{x_{j_i}\}_{i=1}^{d-1}$. In a geometric algebra perspective, this operation corresponds to first obtaining the volume of the parallelotope and dividing it with the base, which is equivalent to calculating a distance between each point $x$ and the span of $\{x_{j_1}, x_{j_2}, \cdots, x_{j_{d-1}}\}$.
As a corollary the hyperplane arrangement patterns of the optimal neural network should take the form:
\begin{equation}\label{sample:ga}
D =  \diag(\mbI(Xh\geq 0)),\quad h=\times (x_{j_1},\dots,x_{j_{d-1}}).  
\end{equation}
Thus, a strategy to subsample hyperplane arrangements via geometric algebra is to randomly sample a size-$(d-1)$ subset $i_1,\dots,i_{d-1}$ from $[n]$ and then compute $D$ via \eqref{sample:ga}, which is equivalent to randomly sampling a parallelotope based on the training data. 
In addition, we can find the full regularization path after subsampling or full enumeration. The following lemma is a simple consequence of methods known for the Lasso regularization path.
\begin{lemma}\label{lem:path}
The regularization path of the optimal solution to \eqref{train:noncvx} with respect to the regularization parameter $\beta>0$ can be calculated by solving \eqref{cvxnn:lasso}.
\end{lemma}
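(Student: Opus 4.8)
The plan is to reduce the regularization path of the nonconvex training problem \eqref{train:noncvx} to that of the convex Lasso problem \eqref{cvxnn:lasso}, and then invoke classical results on tracing the Lasso path. The essential structure, already established above, is twofold: (i) for every fixed $\beta>0$ the two problems are equivalent in that they attain the same optimal value, and (ii) there is an explicit, $\beta$-independent map sending any optimal $z^*$ of \eqref{cvxnn:lasso} to an optimal network of \eqref{train:noncvx} via $f^\text{ReLU}_{\theta^*}(x)=\sum_j z_j^*\kappa(x,x_{j_1},\dots,x_{j_{d-1}})$. Crucially, the dictionary matrix $K$ depends only on the data $X$ and not on $\beta$, so a single family of Lasso problems indexed by $\beta$ governs the entire path.

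First I would fix the correspondence precisely: for each $\beta>0$, let $z^*(\beta)$ denote an optimal solution to \eqref{cvxnn:lasso} and let $\theta^*(\beta)$ be the network obtained from $z^*(\beta)$ through the construction above. Since the two problems share the same optimal value and the construction produces a feasible network achieving that value, $\theta^*(\beta)$ is optimal for \eqref{train:noncvx} at the same $\beta$. Thus computing the collection $\{\theta^*(\beta):\beta>0\}$ reduces to computing the Lasso path $\{z^*(\beta):\beta>0\}$, and the reduction is exact because $K$ is held fixed.

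Second I would invoke the standard theory of the Lasso regularization path. For a fixed design $K$ and convex loss $\ell(\cdot,y)$, the path $\beta\mapsto z^*(\beta)$ can be traced by homotopy/continuation methods: in the squared-loss case it is piecewise linear in $\beta$ with finitely many breakpoints where the active set changes, so the LARS/LARS-Lasso recursion recovers the full path exactly; for a general smooth convex loss one uses predictor-corrector continuation based on the KKT optimality conditions. Composing this path with the fixed map $z\mapsto\theta$ then yields the regularization path of \eqref{train:noncvx}, which is the claim.

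The main technical care is not in any deep estimate but in the bookkeeping at the breakpoints of the path: one must check that the map from Lasso solutions to network weights stays consistent where the support of $z^*$ changes and where $z^*(\beta)$ may fail to be unique, so that the traced path is well defined. Because $K$ is fixed and the construction formula is linear in $z^*$, this reduces to the standard uniqueness and tie-breaking analysis already present in the Lasso-path literature, which is precisely why the lemma is a direct consequence of known results.
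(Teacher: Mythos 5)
Your proposal is correct and follows essentially the same route as the paper's proof: reduce the path of \eqref{train:noncvx} to the Lasso path of \eqref{cvxnn:lasso} via the $\beta$-independent reconstruction map $z^*\mapsto\theta^*$, then invoke LARS/homotopy methods to trace that path. The paper additionally records a complexity bound (the LARS recursion terminates in at most $3^q$ steps with $q\leq 2\binom{n}{d-1}$ dictionary columns), but this is not needed for the statement itself, and your extra care about breakpoints and non-uniqueness is a reasonable refinement rather than a departure.
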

We provide an video illustration of the  regularization path of the optimal network (see the link in Section \ref{sec:num_toy}).

\subsection{Approximating generalized cross-product via sketching}
\label{section:sketch}

In practice, with large input dimension $d$, the computational cost of computing the generalized cross-product can be costly. To reduce the computation complexity in sampling optimal weight vectors via generalized cross-product, we perform randomized embeddings to reduce the input dimension. Formally, given a sketch size $r\ll d$ and an embedding matrix $S\in\mbR^{m\times d}$, we project the training data to dimension $r$, i.e., $XS$. With a proper choice of $S$, the random projection of the training data can approximately preserve pair-wise distance with high probability \citep{vempala2005random}. 

Regarding the choice of the sketching matrix $S$, we primarily focus on sparse Johnson-Lindenstrauss transform (SJLT) \citep{sjlt}, with one non-zero entry per column. 

Based on the sketching matrix $S$, we can compute the optimal weight vector for the projected dataset as follows:
\begin{equation}
    \tilde v= \times(Sx_{j_1},\dots,Sx_{j_{r-1}}).
\end{equation}
We then embed $\tilde v\in\mbR^r$ to $\mbR^d$ by $v=S^T\tilde v$. The approximate optimal weight vector $v$ has the following property: first, it is orthgonal to the data samples $x_{j_1},\dots,x_{j_{r-1}}$. 
\begin{proposition}
Let $\{j_i\}_{i=1}^{r-1}\subseteq [n]$ be a subset of $[n]$. Suppose that $v=S^T\times(Sx_{j_1},\dots,Sx_{j_{r-1}})$. Then, we have
\begin{equation}\label{ortho1}
    v^Tx_{j_i} = 0, \forall i\in[r-1].
\end{equation}
\end{proposition}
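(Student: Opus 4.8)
The plan is to reduce the claim to the fundamental orthogonality property of the generalized cross-product in the sketched space $\mbR^r$, by moving the projection $S$ across the inner product via its adjoint $S^T$.

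First I would fix an arbitrary $i\in[r-1]$ and write $\tilde v = \times(Sx_{j_1},\dots,Sx_{j_{r-1}})\in\mbR^r$, so that $v = S^T\tilde v$. Using the transpose relation between $S$ and $S^T$, I obtain
\begin{equation*}
v^Tx_{j_i} = (S^T\tilde v)^Tx_{j_i} = \tilde v^T Sx_{j_i} = (Sx_{j_i})^T\times(Sx_{j_1},\dots,Sx_{j_{r-1}}).
\end{equation*}
The key observation is that after this manipulation the test vector is exactly $Sx_{j_i}$, which is precisely the $i$-th argument of the cross-product defining $\tilde v$.

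Next I would invoke the orthogonality of the generalized cross-product to each of its generating vectors. Applying the wedge formula \eqref{equ:wedge} in $\mbR^r$ with $x = Sx_{j_i}$ gives
\begin{equation*}
(Sx_{j_i})^T\times(Sx_{j_1},\dots,Sx_{j_{r-1}}) = (Sx_{j_i}\wedge Sx_{j_1}\wedge\dots\wedge Sx_{j_{r-1}})\,\mathbf{I}^{-1}.
\end{equation*}
Since $Sx_{j_i}$ appears twice in this $r$-fold wedge and the wedge product is alternating, the expression vanishes; equivalently, the associated determinant $\det[Sx_{j_i}, Sx_{j_1},\dots,Sx_{j_{r-1}}]$ has two identical columns and is therefore zero. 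Hence $v^Tx_{j_i}=0$, and because $i\in[r-1]$ was arbitrary the claim holds for all $i$.

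There is no substantive obstacle in this argument: it rests entirely on the single fact that a generalized cross-product is orthogonal to each of the vectors generating it, which \eqref{equ:wedge} supplies directly through the vanishing of a degenerate volume. The only point that warrants care is the adjoint step $v^Tx_{j_i} = \tilde v^T(Sx_{j_i})$, where one must verify that $S$ lands on the data point $x_{j_i}$ so that the resulting vector coincides \emph{exactly} with an argument of $\tilde v$; once that alignment is confirmed, degeneracy of the parallelotope finishes the proof.
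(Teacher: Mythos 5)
Your proof is correct and follows essentially the same route as the paper's: both reduce $v^Tx_{j_i}$ to $(Sx_{j_i})^T\tilde v$ via the adjoint step and then invoke the orthogonality of the generalized cross-product to each of its arguments. The only difference is that you additionally justify that orthogonality through the degenerate wedge/determinant, whereas the paper cites it as a known property.
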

The following property shows that for any subsampled data $x_{j_r}$, the weight vector from randomized geometric algebra is orthogonal to it in expectation. 
\begin{proposition}
    Let $\{j_i\}_{i=1}^{r-1}\subseteq [n]$ be a subset of $[n]$ and $j_r\in [n]$. Suppose that $v=S^T\times(Sx_{j_1},\dots,Sx_{j_{r-1}})$. Assume that each row of $S$ follows the idential distribution. Then, we have
    \begin{equation}
        \mbE_S[v^Tx_{j_r}]=0.
    \end{equation}
\end{proposition}

We summarize the algorithm of training the convex neural network via randomized geometric algebra in Alg \ref{alg:GA}.
\begin{algorithm}[!htp]
\caption{Convex neural network training via randomized Geometric Algebra}
\label{alg:GA}
\begin{algorithmic}[1]
\REQUIRE Number of hyperplane arrangement samples $k$, regularization parameter $\beta>0$, sketching matrix $S\in\mbR^{m\times d}$.
\FOR{$i=1,\dots,k$}
\STATE Sample $\{j_i\}_{i=1}^{r-1}$ from $[n]$.
\STATE Compute $v_i=S^T\times(Sx_{j_1},\dots,Sx_{j_{r-1}})$.
\STATE Compute $\bar D_i = \diag(\mbI(Xv_i\geq 0))$.
\ENDFOR
\STATE Solve the convex optimization problem \eqref{cvxnn:relu} with subsampled arrangements.
\end{algorithmic}
\end{algorithm}
\subsection{Why use Geometric Algebra? Case analysis in 2D}
\label{section:WhyGA}
The intuition behind why using Geometric Algebra could lead to a better sampling algorithm than Gaussian sampling is as follows: When considering a Gaussian random matrix \(X\), the probability that a uniformly distributed point on the unit sphere falls within the chamber defined by \(Xh \geq 0\) is influenced by the Gaussian measure and given by
\[
\mathbb{P}_{h \sim \text{Unif}(\mathbb{S}^{d-1})}[1(Xh \geq 0)].
\]

where the chamber is defined as the set of directions that have the same arrangement pattern, i.e.,
\[
\mathcal{C}_{D} = \{s\ |\ 1(Xs \geq 0) = D,\ \lVert s \rVert_2 = 1 \}.
\]
In contrast, GA sampling is not influenced by the measure of the chambers as we illustrate in this section. Detailed proofs of the results in this section are available in Appendix \ref{app:section:WhyGA}.

Consider the case where $X$ is a random Gaussian matrix. We know that when we normalize the Euclidean length of each row, the rows of $X$ are marginally distributed uniformly on the unit sphere. In this case, when we sample $n$ points, it can be seen that the minimum chamber has probability of order $O(\frac{1}{n^2})$ under the Gaussian measure. We formalize this below.
\begin{theorem}
\label{prop:2D}
Suppose that $d=2$. Let $D_1, D_2, \cdots, D_n, \bar{D}_1, \bar{D}_2, \cdots, \bar{D}_n$ be $2n$ possible activation patterns, and $\bar{D}$ denotes the complement of $D$. Suppose $X$ is a Gaussian random matrix. Then, for $i \in [n]$, the joint distribution of probabilities 
$$
\mathbb{P}_{u \sim \mathbb{S}^{1}} [1(Xu \geq 0) = D_i],
$$
as a random variable of $X$ is distributed as
$$
\frac{1}{2} \frac{E_i}{\sum_{i=1}^{n} E_i},
$$
where $E_1, E_2, \cdots E_n$ is a sequence of i.i.d. exponential random variables. Also, with probability at least $1 - e^{-20} - \exp(-Cn)$, we have
$$
\min_{j \in [n]} \mathbb{P}_{u \sim \mathbb{S}^{1}} [1(Xu \geq 0) = D_j] = O(\frac{1}{n^2}),
$$
for some $C > 0$.
\end{theorem}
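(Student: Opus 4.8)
The plan is to reduce the problem to the classical theory of uniform spacings on a circle. First I would exploit the fact that in $d=2$ the activation pattern $1(Xu\ge 0)$ of a unit direction $u$ depends only on its angle, and the chambers $\mathcal{C}_{D}$ are exactly the arcs cut out on $\mathbb{S}^{1}$ by the $2n$ points at which some row becomes orthogonal to $u$. Writing the normalized rows at angles $\theta_1,\dots,\theta_n$ that are i.i.d.\ uniform on $[0,2\pi)$ (normalizing a $2$D Gaussian yields a uniform direction), the boundary points sit at $\theta_i\pm\pi/2$, so they come in $n$ antipodal pairs. The probability $\mathbb{P}_{u\sim\mathbb{S}^{1}}[1(Xu\ge0)=D_i]$ is then simply the length of the corresponding arc divided by $2\pi$.

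The key structural step is the antipodal symmetry: since $u\mapsto -u$ flips every sign, it carries a chamber with pattern $D$ to the chamber with complementary pattern $\bar D$ of equal arc length. Quotienting by this symmetry, I would pass to the projective line $\mathbb{RP}^{1}$ (a circle of circumference $\pi$), on which the $n$ points $\theta_i+\pi/2 \bmod \pi$ are i.i.d.\ uniform. The $n$ gaps $g_1,\dots,g_n$ between consecutive such points sum to $\pi$, and each gap realizes two antipodal arcs of equal length (producing $D_i$ and $\bar D_i$). Invoking the standard uniform-spacings result — that $n$ i.i.d.\ uniform points on a circle of circumference $L$ give spacings distributed as $L\,(E_1,\dots,E_n)/\sum_k E_k$ with $E_k$ i.i.d.\ $\mathrm{Exp}(1)$ — with $L=\pi$ yields $\mathbb{P}[1(Xu\ge0)=D_i]=g_i/(2\pi)=\tfrac12\, E_i/\sum_k E_k$, which is exactly the claimed joint law.

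For the tail bound I would use the representation just derived, namely $\min_j \mathbb{P}_{u\sim\mathbb{S}^{1}}[1(Xu\ge0)=D_j]=\tfrac12\,\min_j E_j/\sum_k E_k$. The minimum of $n$ i.i.d.\ $\mathrm{Exp}(1)$ variables is $\mathrm{Exp}(n)$, so $\mathbb{P}[\min_j E_j\ge 20/n]=e^{-20}$; and a Chernoff bound for sums of exponentials gives $\sum_k E_k\ge n/2$ except with probability $\exp(-Cn)$ for a suitable $C>0$. A union bound then shows that with probability at least $1-e^{-20}-\exp(-Cn)$ we have $\min_j \mathbb{P}[\cdots]\le \tfrac12\cdot(20/n)/(n/2)=O(1/n^2)$, matching the stated constants.

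The main obstacle I anticipate is getting the antipodal reduction exactly right: one must verify that the two boundary points contributed by each row are genuinely antipodal (so the $2n$ arcs collapse to $n$ \emph{independent} uniform spacings rather than $2n$ dependent ones) and that the quotient angles $\theta_i+\pi/2 \bmod \pi$ are i.i.d.\ uniform on $[0,\pi)$. The circle-versus-interval distinction in the spacings theorem ($n$ spacings on a circle, not $n+1$ on an interval) and the bookkeeping of the factor $\tfrac12$ are precisely the places where a careless argument would slip.
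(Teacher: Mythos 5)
Your proposal is correct and follows essentially the same route as the paper: reduce to uniform spacings on a circle, represent the $n$ arc lengths as $\pi\,(E_1,\dots,E_n)/\sum_k E_k$ with i.i.d.\ exponentials, and then combine the exact law of the minimum of $n$ exponentials (giving the $e^{-20}$ term at threshold $20/n$) with concentration of $\sum_k E_k$ around $n$ to get the $O(1/n^2)$ bound. The only difference is one of detail: you make explicit the antipodal pairing of the $2n$ boundary points and the quotient to a circle of circumference $\pi$ with $n$ i.i.d.\ uniform points, which the paper compresses into a citation and the remark that the $2n$ chamber sizes come in equal pairs.
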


\cref{prop:2D} shows that when the distribution of the rows of $X$ is $Unif(\mathbb{S}^{1})$, the smallest chamber has volume scaling with $\frac{1}{n^2}$. Hence, when we apply Gaussian sampling, to guarantee that we have sampled all patterns, we need to sample at least $\Omega(n^2)$ times.\\
On the other hand, sampling with Geometric Algebra can sample all hyperplane arrangement patterns with high probability, only by sampling $O(n)$ patterns. That is because Geometric Algebra weighs the sampling probability of each activation pattern the same. Recall that in $\mathbb{R}^2$ and $\mathbb{G}^2$, the generalized cross product corresponds to rotating a vector 90 degrees.
\begin{theorem}
Suppose that $d=2$ and no two rows of $X$ are parallel. Consider the following instantiation of Geometric Algebra sampling:
\begin{enumerate}
\item Sample $i \in [n]$, and randomly rotate it 90 degrees, clockwise or counterclockwise. Let $v$ the obtained vector.
\item Compute $\diag(1(Xv \geq 0))$.
\end{enumerate}
Then, we have
$$
\mathbb{P}[diag(1(Xv \geq 0) = D_j)] = \frac{1}{2n}. 
$$
for all $j \in [2n]$.
\end{theorem}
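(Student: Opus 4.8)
The plan is to reduce the claim to a purely geometric statement about arcs on the circle and then push a uniform distribution through an explicit bijection. First I would record the two-dimensional specialization of the generalized cross product: since here $d-1=1$, $\times(x_i)=Rx_i$ where $R$ is the $90^\circ$ rotation, so the sampled direction $v$ is always orthogonal to the chosen row $x_i$. Next I would set up the pattern geometry: each row $x_i$ determines a line $\ell_i=\{u : x_i^T u=0\}$ through the origin, and the arrangement pattern $\mathrm{diag}(1(Xu\ge 0))$ of a direction $u\in\mathbb{S}^1$ is constant exactly on the open arcs cut out by the $\ell_i$. Because no two rows of $X$ are parallel, the $n$ lines are distinct and partition $\mathbb{S}^1$ into exactly $2n$ arcs, which are in bijection with the $2n$ patterns $D_1,\dots,D_n,\bar D_1,\dots,\bar D_n$ (antipodal arcs giving complementary patterns).

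Second, I would analyze the sampling law. Choosing $i\in[n]$ uniformly and a sign uniformly produces $v\in\{\pm Rx_i\}_{i=1}^n$, a set of $2n$ candidate directions. Using again that no two rows are parallel, I would check that these directions are pairwise distinct: $Rx_i=\pm Rx_j$ forces $x_i\parallel x_j$, hence $i=j$. Therefore each direction is selected with probability $\tfrac1n\cdot\tfrac12=\tfrac1{2n}$. The crucial structural observation is that each such $v$, being orthogonal to $x_i$, lies exactly on the boundary ray $\ell_i\cap\mathbb{S}^1$ separating two neighbouring arcs, and conversely every one of the $2n$ arc-boundaries arises this way. Thus the GA sampling is precisely the uniform distribution on the $2n$ arc-endpoints, independent of the (Gaussian) measure of the arcs themselves.

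The final and most delicate step is to convert this uniform law on arc-boundaries into the uniform law on arcs, i.e.\ to show that the assignment ``boundary ray $\mapsto$ realized pattern $\mathrm{diag}(1(Xv\ge 0))$'' is a bijection onto $\{D_j\}_{j=1}^{2n}$. Fixing a counterclockwise orientation, I would associate each boundary ray at angle $\theta_k$ (with the $2n$ endpoints sorted as $\theta_1<\dots<\theta_{2n}$) to the arc $(\theta_k,\theta_{k+1})$ immediately counterclockwise of it; this map is manifestly a bijection of the $2n$ rays onto the $2n$ arcs, and one then verifies that a consistent half-open tie-breaking rule at the single crossed line makes $\mathrm{diag}(1(Xv\ge 0))$ coincide with that arc's pattern. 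Transporting the uniform mass $\tfrac1{2n}$ through this bijection yields $\mathbb{P}[\mathrm{diag}(1(Xv\ge 0))=D_j]=\tfrac1{2n}$ for every $j\in[2n]$.

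The hard part is precisely this tie-breaking. Since $v$ lands \emph{on} a chamber boundary rather than in its interior, the raw $\ge 0$ convention is not symmetric between the two adjacent arcs: at the crossed line it always returns the value $1$, which matches the counterclockwise arc for one orientation of the boundary but the clockwise arc for the other. A naive accounting can therefore double-count one arc while missing an adjacent one, so the real content of the proof is to fix a single consistent orientation convention under which each of the $2n$ signed rotations is charged to a distinct pattern; once that bijection is established, the uniformity $1/(2n)$ is immediate. This also makes the contrast with \cref{prop:2D} transparent: GA sampling assigns mass $1/(2n)$ to every chamber irrespective of its Gaussian measure, whereas uniform-direction sampling charges each chamber in proportion to its (possibly $O(1/n^2)$) arc length.
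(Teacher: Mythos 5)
Your proposal follows the same route as the paper's own proof: reduce the $2$-D cross product to a $90^\circ$ rotation, observe that the $n$ lines $\{u : x_i^Tu=0\}$ cut $\mathbb{S}^1$ into exactly $2n$ arcs in bijection with the $2n$ patterns, note that the sampler puts mass $\tfrac{1}{2n}$ on each of the $2n$ pairwise-distinct directions $\pm Rx_i$, and then transport uniformity through a claimed bijection from these directions to the arcs. You are also right to single out the tie-break as the crux: since $x_i^Tv=0$ exactly, $v$ sits on a chamber wall, and $1(0\ge 0)=1$ assigns the wall to whichever adjacent arc has $x_i^Tu>0$.

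The gap is that you leave this crux unresolved (``one then verifies that a consistent half-open tie-breaking rule \dots''), and for the theorem as literally stated no such verification is possible, because the claim is false under the $\ge 0$ convention. Take $n=2$, $x_1=(1,0)$, $x_2=(0,1)$ (no two rows parallel). The four sampled directions are $(0,1)$, $(0,-1)$, $(-1,0)$, $(1,0)$, each with probability $\tfrac14$, and their patterns under $1(Xv\ge 0)$ are $(1,1)$, $(1,0)$, $(0,1)$, $(1,1)$ respectively: the pattern $(1,1)$ receives probability $\tfrac12$ while the pattern $(0,0)$ (the third quadrant) receives probability $0$. The obstruction is exactly the asymmetry you describe: the $\ge 0$ rule charges $R_{\pi/2}x_i$ to the arc clockwise of it but $R_{-\pi/2}x_i$ to the arc counterclockwise of it, so an arc whose counterclockwise endpoint is a $+90^\circ$ image and whose clockwise endpoint is a $-90^\circ$ image is charged twice, while its antipodal arc is charged zero times. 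To make the statement true one must modify the pattern map---for example, perturb $v$ infinitesimally in a fixed rotational direction before evaluating the indicator, or tie the choice of strict versus non-strict inequality to the orientation of the crossed line---after which your counterclockwise-arc bijection does go through. For what it is worth, the paper's own proof silently makes the same unjustified step (it asserts that rotating a point of a chamber clockwise onto the nearest wall preserves the pattern, which fails for the $(0,0)$ chamber above), so your write-up is more candid about where the difficulty lies, but neither argument closes it.
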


We leave extending this analysis to high dimension to future work. However, the qualitative difference between the Geometric Algebra sampling and Gaussian sampling, where one is independent of the chamber measure and the other is dependent, remains the same in high dimensions. Hence, for higher dimensions, we can expect that Geometric Algebra sampling will lead to more efficient algorithms.





\newcommand{\urllink}{https://github.com/pilancilab/Randomized-Geometric-Algebra-Methods-for-Convex-Neural-Networks} 

\newcommand{\urllinkvideo}{https://github.com/pilancilab/Randomized-Geometric-Algebra-Methods-for-Convex-Neural-Networks/tree/main/video}

\section{Numerical Experiments}
We compare the convex neural network training with the bias term via Gaussian sampling (Alg. \ref{alg:Gaussian_bias}) and via randomized Geometric Algebra (Alg. \ref{alg:GA_bias}) along with directly training the non-convex neural network by optimizing \eqref{train:noncvx}. All numerical experiments are conducted on a Dell PowerEdge R840 workstation (64 core, 3TB ram). The code is available at \url{\urllink}. 
\subsection{Geometric Algebra vs Gaussian Sampling}\label{sec:num_toy}
We first test the performance of convex neural network training on a toy 2D spiral dataset with $n=160$ training data. For both convex training methods (with Gaussian sampling and Geometric Algebra sampling), we use $200$ hidden neurons and set $\beta=10^{-3}$. As the training dataset is 2-dimensional, we also enumerate all entries in the dictionary matrix $K$ in \eqref{cvxnn:lasso} and solve the convex lasso problem \eqref{cvxnn:lasso}. We also subsample $200$ rows from the dictionary matrix $K$ and solve the subsampled convex lasso problem as well. For the convex training method with Geometric Algebra and Gaussian Sampling, we subsample $200$ hyperplane arrangements and solve the convex optimization formulation \eqref{cvxnn:relu}. From Figure \ref{fig:spiral}, we note that convex training method with Geometric Algebra sampling is more capable of learning complicated decision regions from the training data compared to the one with Gaussian sampling. Via the convex lasso problem \eqref{cvxnn:lasso} and its subsampled version, we also animate the entire path of decision regions of the (subsampled) Convex Lasso method with respect to the regularization parameter $\beta$, which is available at \href{\urllinkvideo}{here}. 
\begin{figure}[t]
\centering
\begin{minipage}[t]{0.85\textwidth}
\centering
\includegraphics[width=\linewidth]{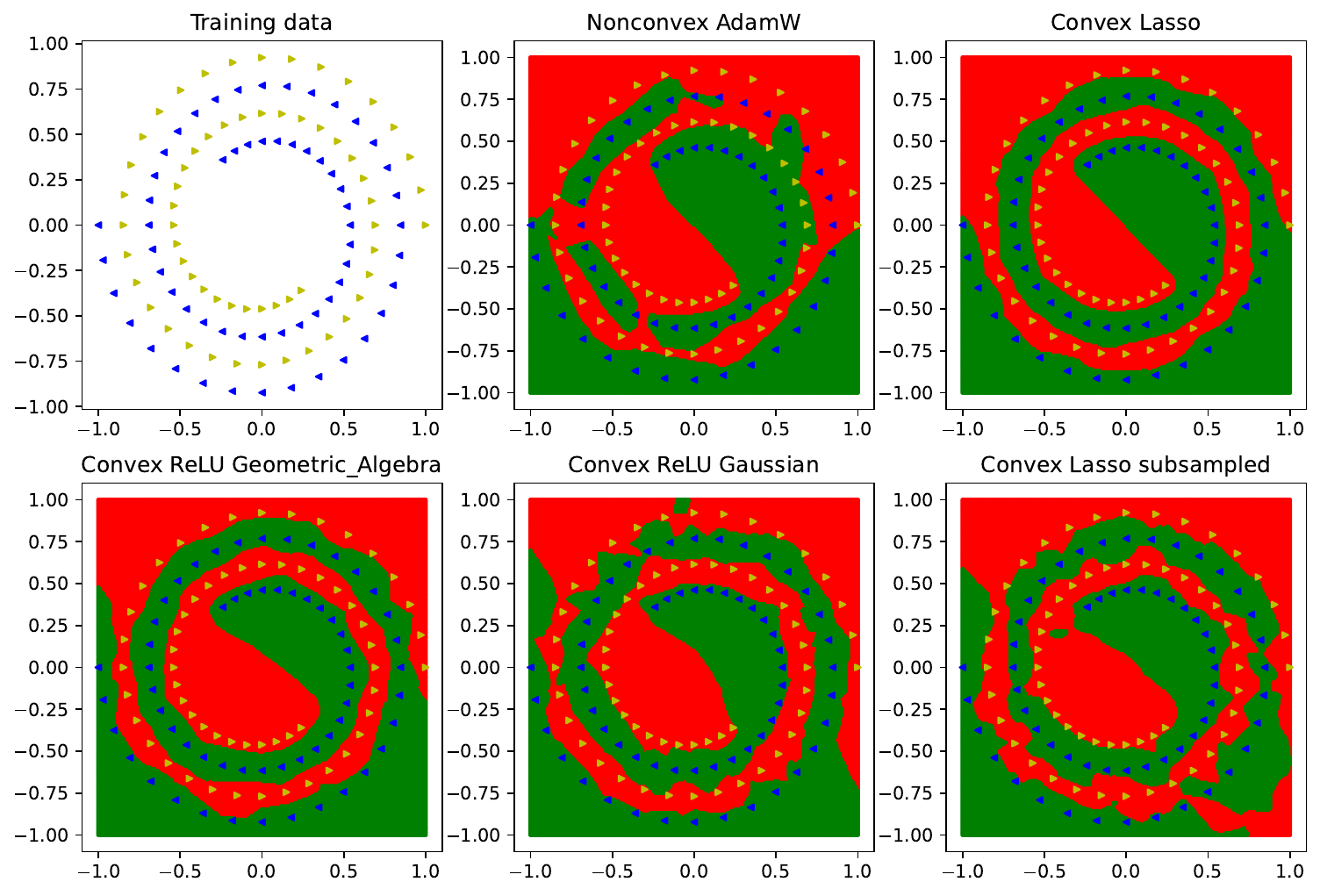}
\end{minipage}
\caption{Decision regions from different variants of convex optimization based training. The triangles represent data points in the training set. The \emph{Convex Lasso} method directly solves the convex lasso problem \eqref{cvxnn:lasso}. The \emph{Convex Lasso subsampled} method subsamples $200$ rows from the dictionary matrix $K$ in \eqref{cvxnn:lasso} and solves the subsampled problem. The methods  \emph{Geometric Algebra} and \emph{Gaussian} solve the convex optimization formulation \eqref{cvxnn:relu} with $200$ subsampled hyperplane arrangement patterns with geometric algebra and Gaussian samples respectively. See the video demonstration \href{https://anonymous.4open.science/r/CVXNN-randomized-GA-D400/video/full.mp4}{here}.}\label{fig:spiral}
\end{figure}

\subsection{Feature-based transfer learning}

We test upon the IMDb and GLUE-QQP datasets for sentimental analysis and ECG datasets with text/signal features for classification. For text datasets including IMDb and GLUE-QQP, we use OpenAI's test embedding model to extract feature vectors and train a neural network to classify the sentiment based on these features. We also compare with the baseline of a linear classifier based on extracted embedding features. The datasets we are employing are integral to our analysis, each offering various insights into how contextual information can significantly impact the performance of classification.

\begin{itemize}
    \item \textbf{IMDb Dataset:} This dataset is a collection of movie reviews from the IMDb website, designed for binary sentiment classification. With 25,000 training samples and an equal number of test samples, the dataset has been balanced across positive and negative reviews.

    \item \textbf{Amazon Polarity Dataset:} As a subset of a larger corpus, the Amazon Polarity dataset focuses on the sentiment aspect of customer reviews. It's a more extensive dataset, containing 3.6 million training samples and 400,000 test samples, categorized into positive and negative sentiments. We subsample $30000$ rows in our experiment.


    \item \textbf{GLUE CoLA Dataset (Corpus of Linguistic Acceptability):} This dataset is part of the GLUE benchmark, designed for the task of linguistic acceptability (judging whether a sentence is grammatically correct or not). It consists of sentences from professional linguistics literature, and it's typically used to assess the ability of models to understand the nuances of English grammar. The dataset contains around 10,000 sentences, split into training and test sets.

    \item \textbf{GLUE QQP Dataset (Quora Question Pairs):} Another component of the GLUE benchmark, this dataset is focused on determining whether a pair of questions asked on the Quora platform are semantically equivalent. It aims to foster the development of models that can understand and identify paraphrase in questions. The dataset is quite extensive, containing over 400,000 question pairs, with a balanced distribution of positive (paraphrase) and negative (non-paraphrase) examples. We subsample $50000$ rows in our experiment.

    \item \textbf{ECG Dataset:} The PTB-XL dataset is a 12-lead ECG waveforms dataset. It is stored in WFDB format at a 100Hz sampling rate. The data’s fidelity is ensured by 16-bit precision and 1$\mu V$/LSB resolution, with accompanying reports adhering to the SCPECG standard. There are over over 21,000 records from nearly 19,000 patients, with each record spanning 10 seconds. Cardiologists have provided multi-label annotations for each record, which are classified into major superclasses such as Normal ECG, Conduction Disturbance, Myocardial Infarction,	Hypertrophy and ST/T change. Here, we group this data into two major classes, Normal ECG and other 4 superclasses as Abnormal ECG, hence converting it into a binary classification task. 

     \item \textbf{MNIST Dataset:} The MNIST dataset is a standard benchmark  for evaluating the performance of image processing systems. It comprises 60,000 training images and 10,000 testing images of handwritten digits ranging from 0 to 9. We perform a binary classification over the class of $0$ and class of $1$.

\end{itemize}

%


For the non-convex training, we use the AdamW solver \citep{loshchilov2018fixing} and train the neural networks with $m=50$ neurons for $20$ epochs. For the learning rate, we perform a grid search upon $\{10^{-2},10^{-3}, 10^{-4}, 10^{-5}\}$ and choose the one with the best validation accuracy. For the convex training methods, we randomly sample $m=50$ hyperplane arrangement patterns via Gaussian sampling and randomized Geometric Algebra sampling respectively. The regularization parameter $\beta$ is chosen from $\{10^{-3}, 10^{-4}, 10^{-5}, 10^{-6}\}$ with the best validation accuracy. For convex neural network training with randomized Geometric Algebra, we use the SJLT matrix as the sketch matrix and set the sketch dimension $r=100$. For each compared method, we use different sizes of input training data ($n\in\{200,400,\dots,2000\}$) and plot the corresponding test accuracy. The standard deviation is calculated across $5$ independent trials. 

\newcommand{\figsize}{0.4}
\newcommand{\figsizepp}{0.3}

\begin{figure}[H]
\centering
\begin{minipage}[t]{\figsize\textwidth}
\centering
\includegraphics[width=\linewidth]{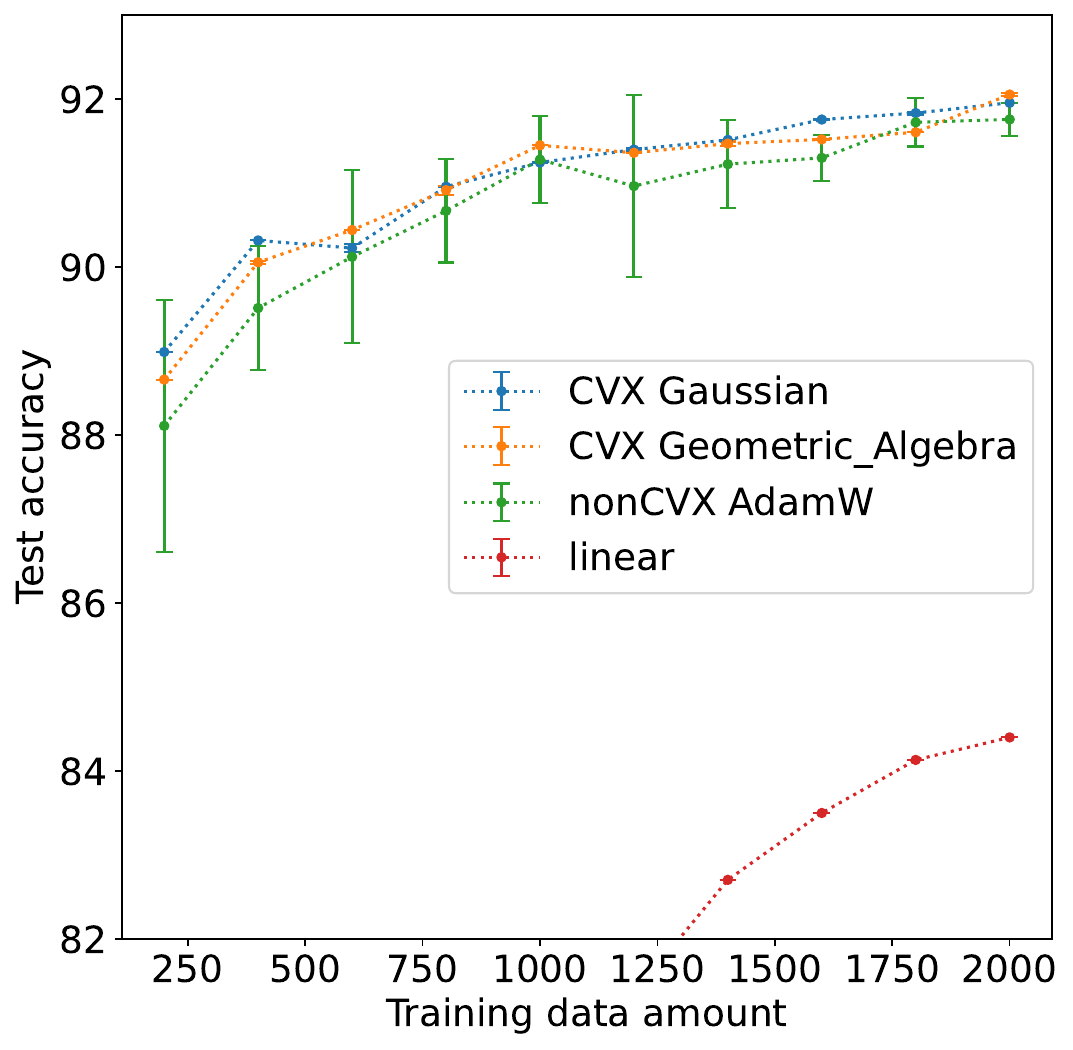}
\caption*{IMDB}
\end{minipage}
\begin{minipage}[t]{\figsize\textwidth}
\centering
\includegraphics[width=\linewidth]{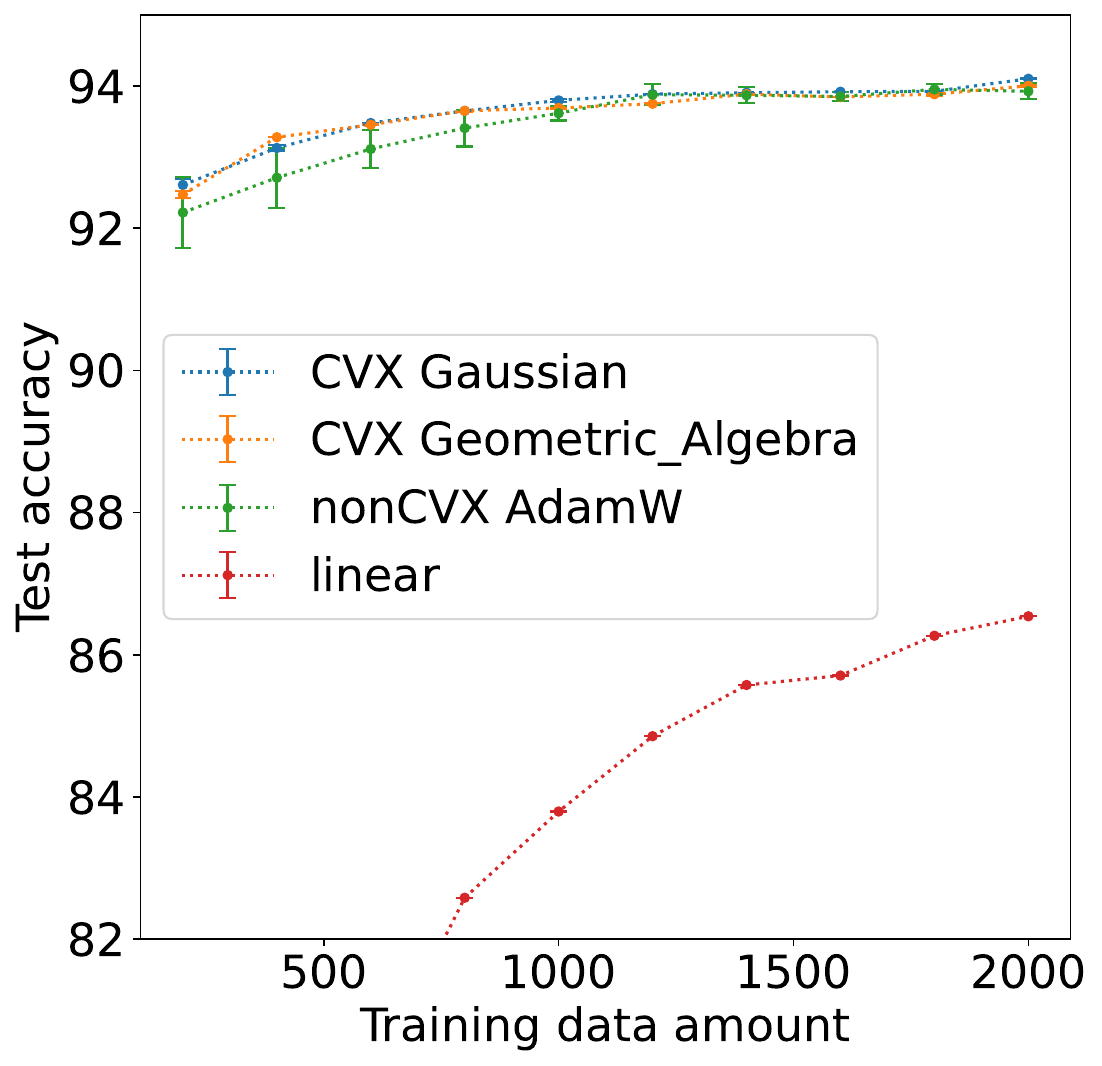}
\caption*{Amazon}
\end{minipage}
\begin{minipage}[t]{\figsize\textwidth}
\centering
\includegraphics[width=\linewidth]{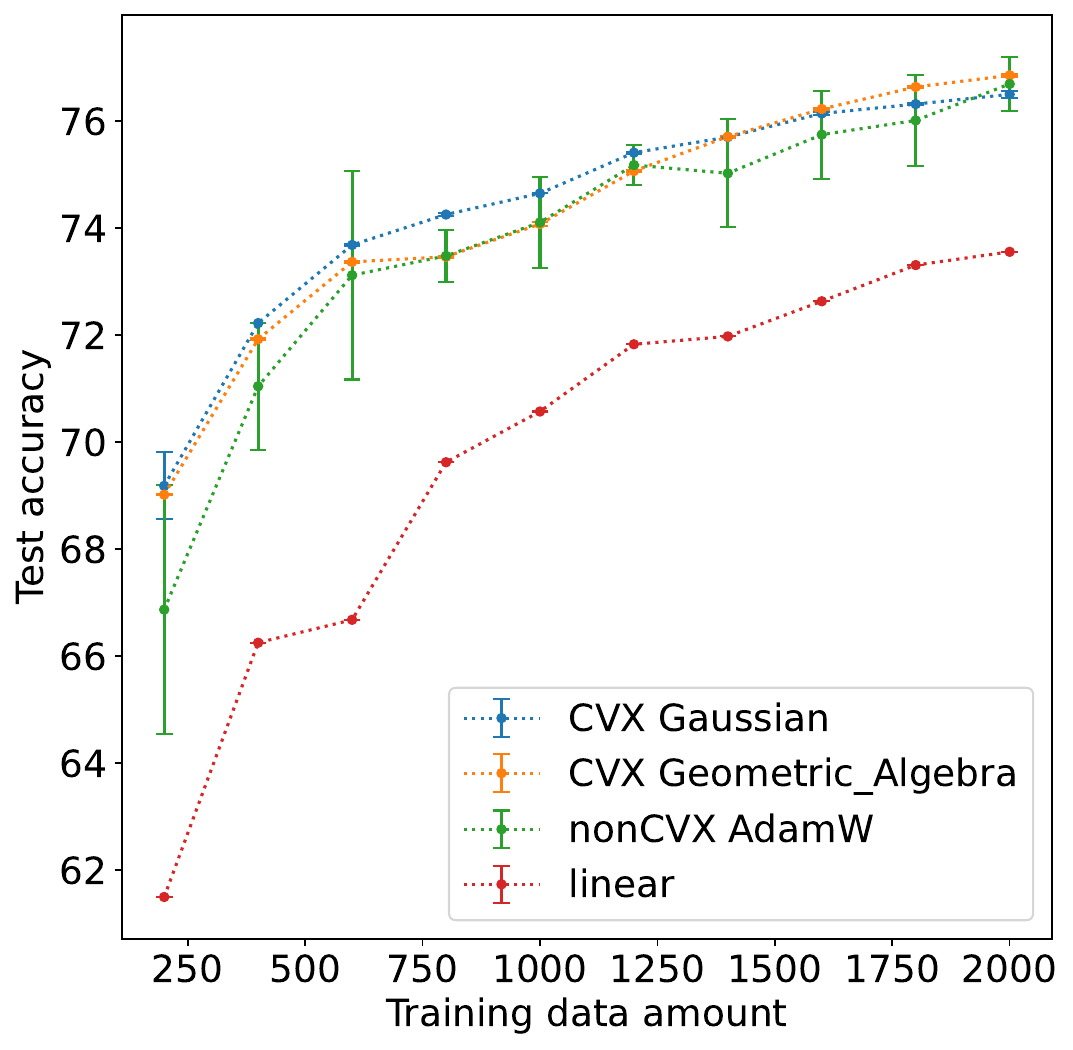}
\caption*{GLUE-QQP}
\end{minipage}
\begin{minipage}[t]{\figsize\textwidth}
\centering
\includegraphics[width=\linewidth]{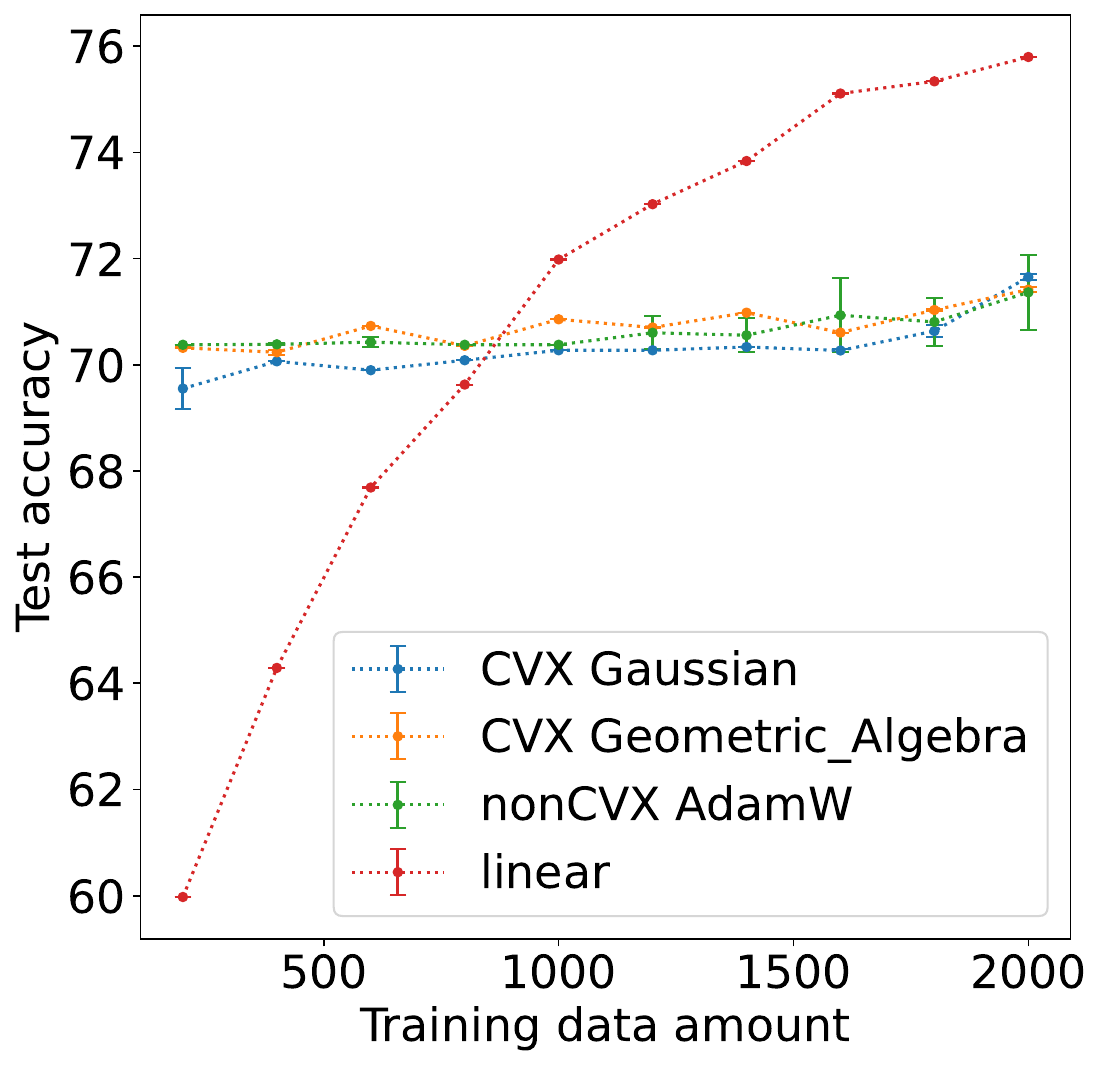}
\caption*{GLUE-COLA}
\end{minipage}
\centering
\begin{minipage}[t]{\figsizepp\textwidth}
\centering
\includegraphics[width=\linewidth]{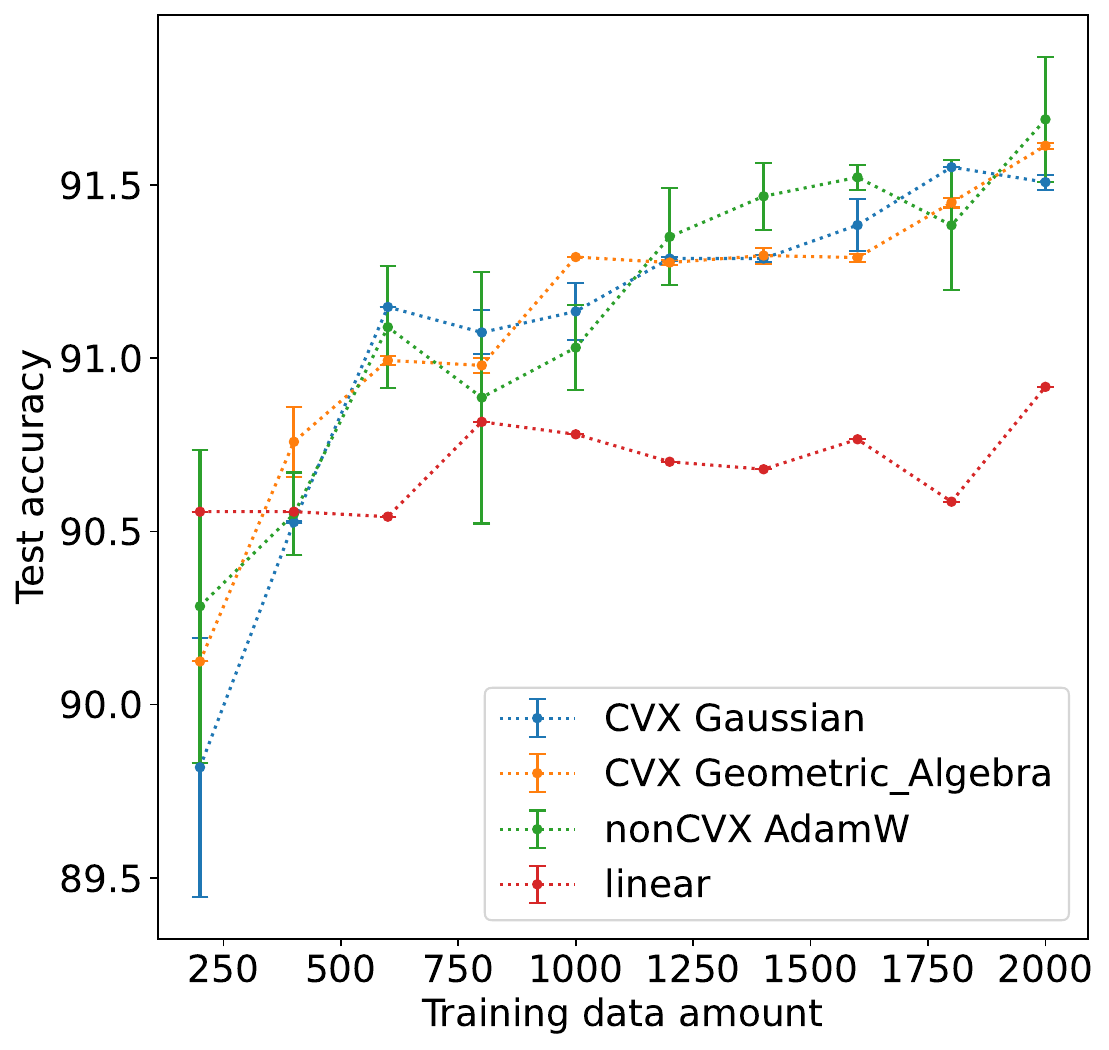}
\caption*{ECG-report}
\end{minipage}
\begin{minipage}[t]{\figsizepp\textwidth}
\centering
\includegraphics[width=\linewidth]{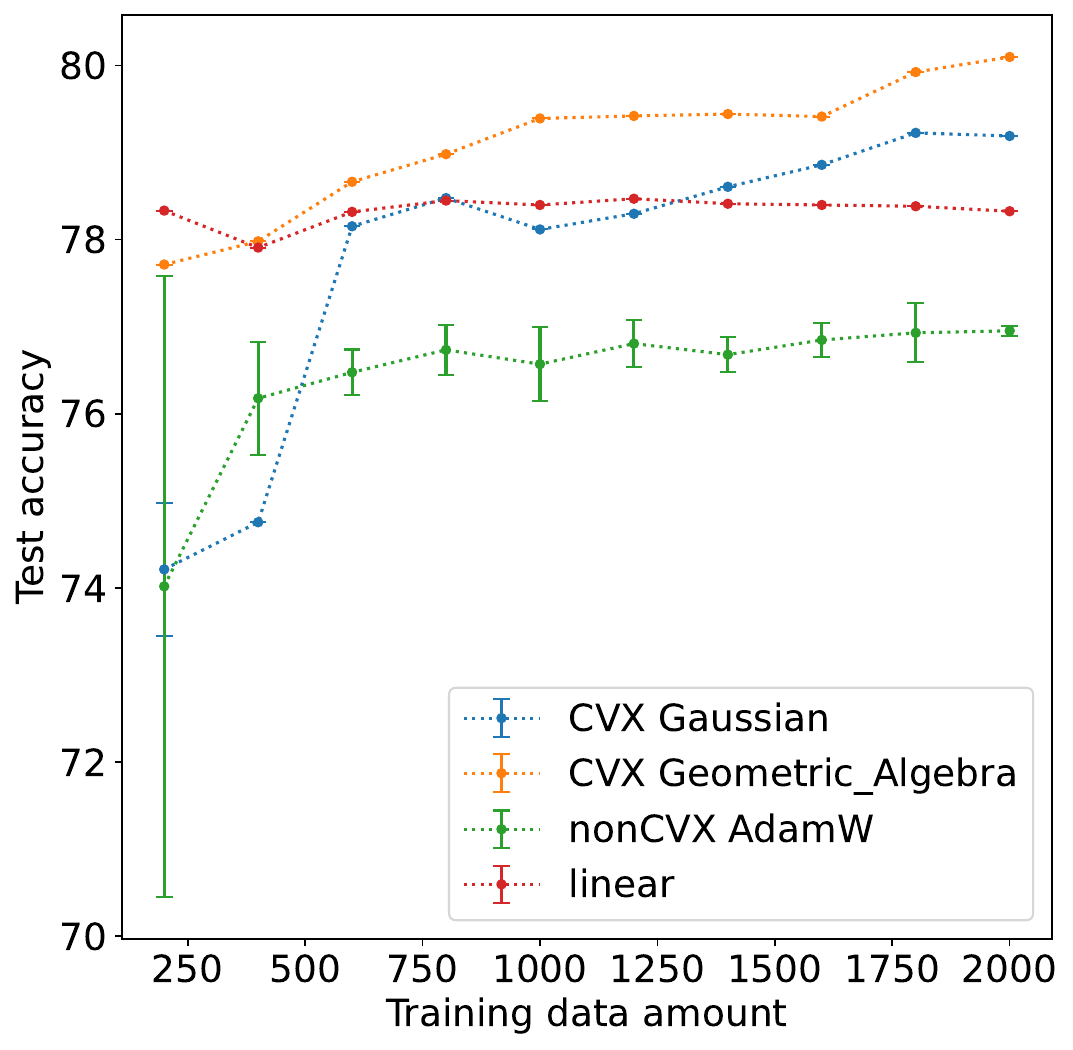}
\caption*{ECG-signal}
\end{minipage}
\begin{minipage}[t]{\figsizepp\textwidth}
\centering
\includegraphics[width=\linewidth]{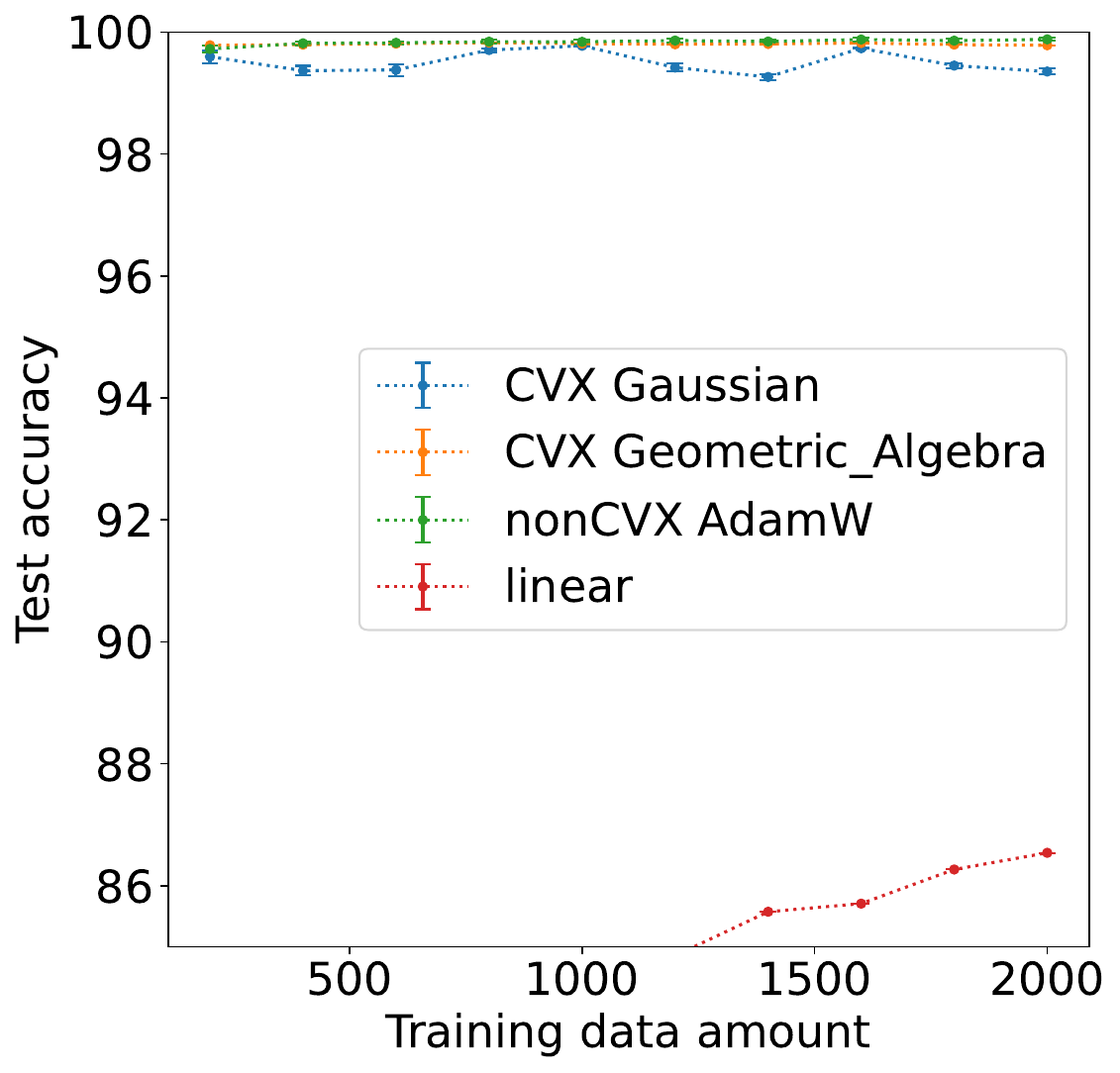}
\caption*{MNIST}
\end{minipage}
\caption{Test accuracy with different sizes of training data. $n\in\{200,400,\dots,2000\}$. The learning rates of AdamW and our method are chosen from a grid search over $\{10^{-2},10^{-3}, 10^{-4}, 10^{-5}\}$ according to the best validation accuracy.}
\end{figure}

Through experiments on various datasets, we show the efficiency of convex optimization methods. The standard deviations in the test accuracy of convex optimization methods are significantly smaller than the non-convex training method, especially when the amount of training data is limited. We also note that the two-layer neural network model significantly outperforms the linear classifier baseline. 

To illustrate the robustness and efficiency of our convex optimization method, we focus on the training dataset with size $n=2000$ and plot the curve of training/test accuracy with respect to the time. 

\newcommand{\figsizep}{0.85}

\begin{figure}[H]
\centering
\begin{minipage}[t]{\figsizep\textwidth}
\centering
\includegraphics[width=\linewidth]{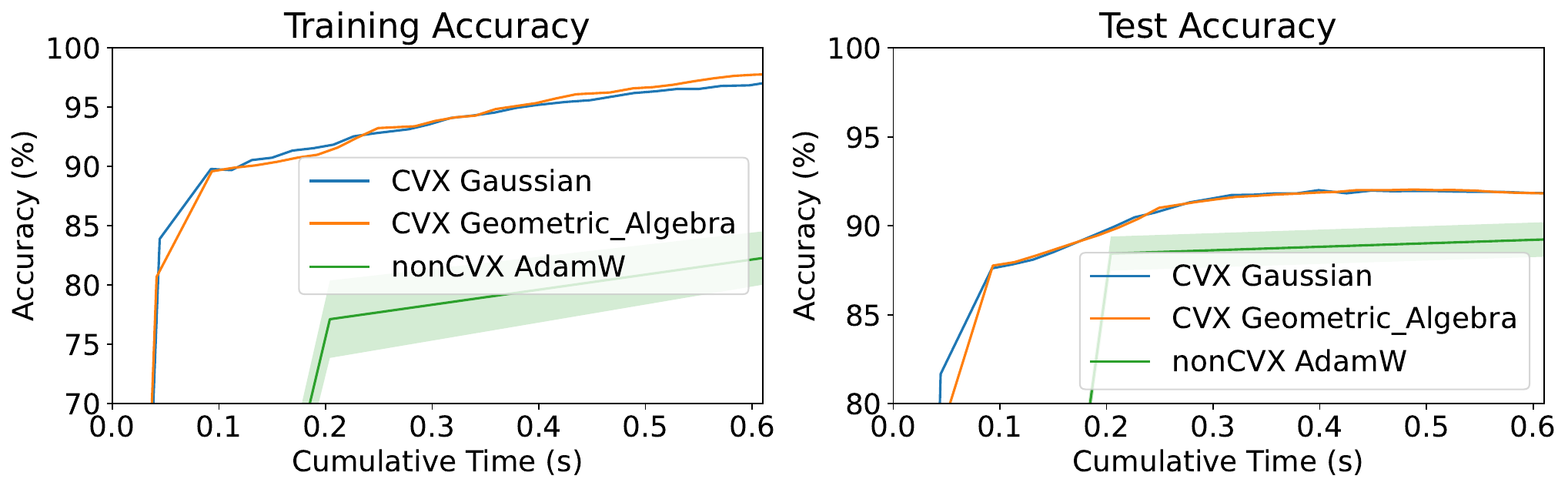}
\caption*{IMDB}
\end{minipage}
\begin{minipage}[t]{\figsizep\textwidth}
\centering
\includegraphics[width=\linewidth]{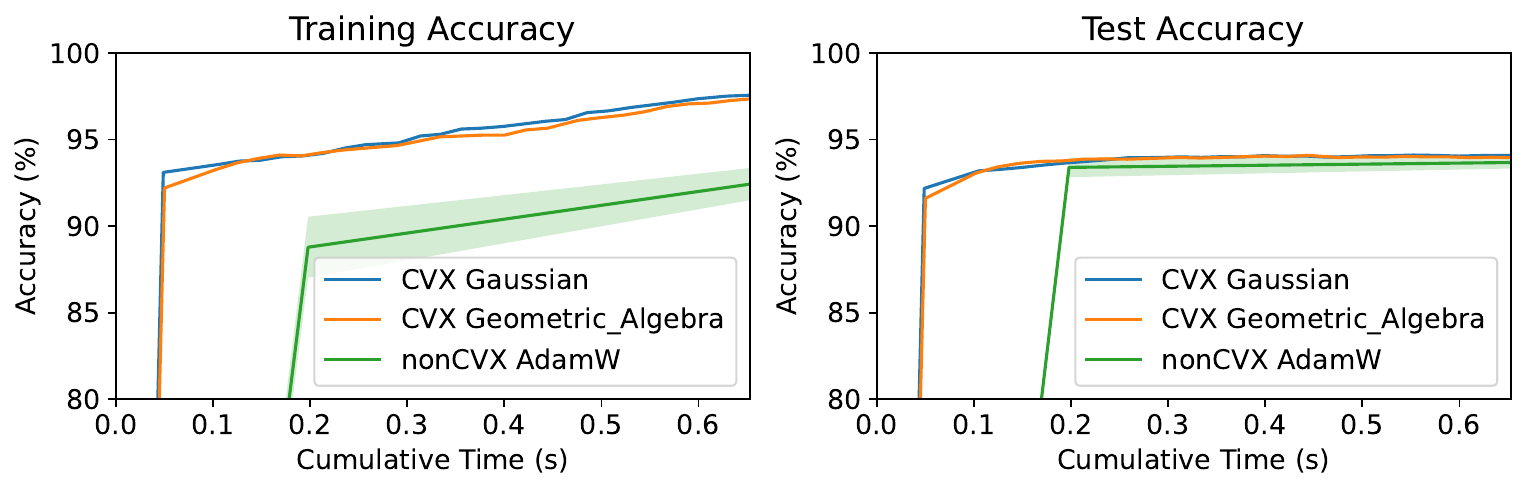}
\caption*{Amazon}
\end{minipage}
\begin{minipage}[t]{\figsizep\textwidth}
\centering
\includegraphics[width=\linewidth]{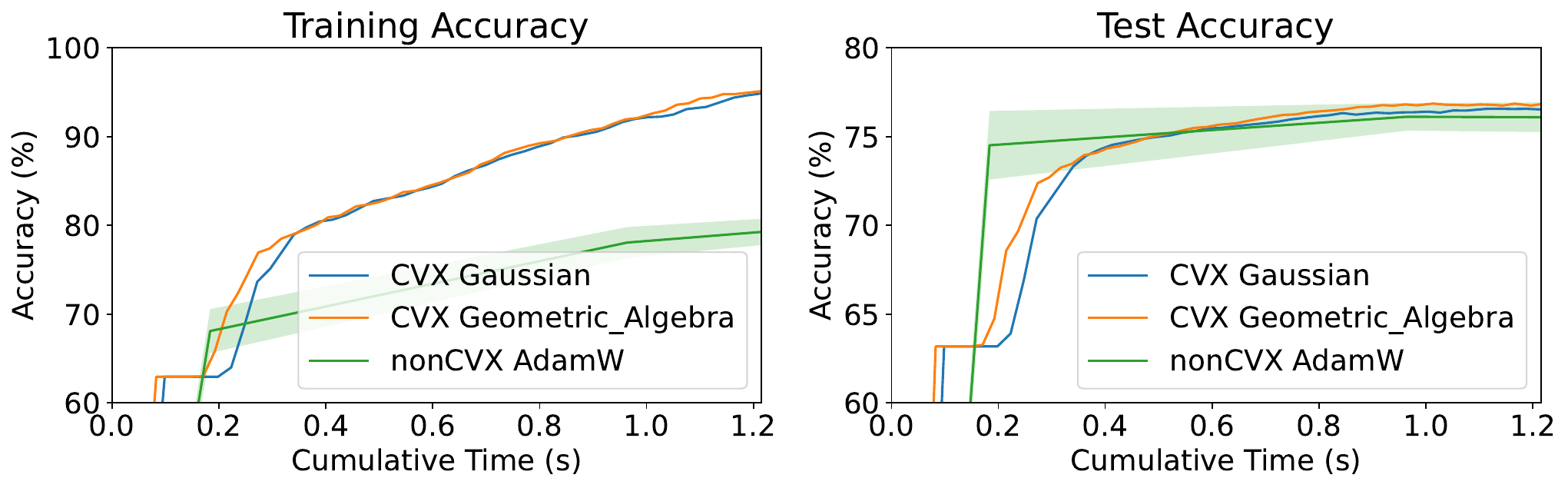}
\caption*{GLUE-QQP}
\end{minipage}
\caption{Train/test accuracy with respect to cpu time. The shaded area represents the standard deviation across 5 independent trials.}
\end{figure}

\begin{figure}[H]
\centering
\begin{minipage}[t]{\figsizep\textwidth}
\centering
\includegraphics[width=\linewidth]{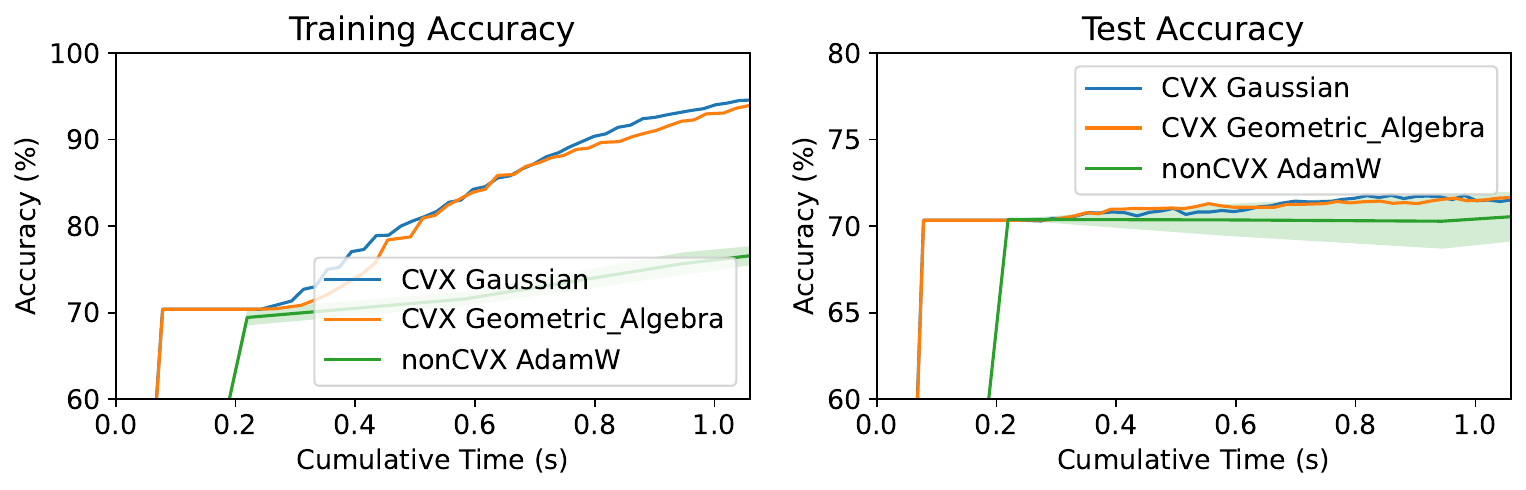}
\caption*{GLUE-COLA}
\end{minipage}
\begin{minipage}[t]{\figsizep\textwidth}
\centering
\includegraphics[width=\linewidth]{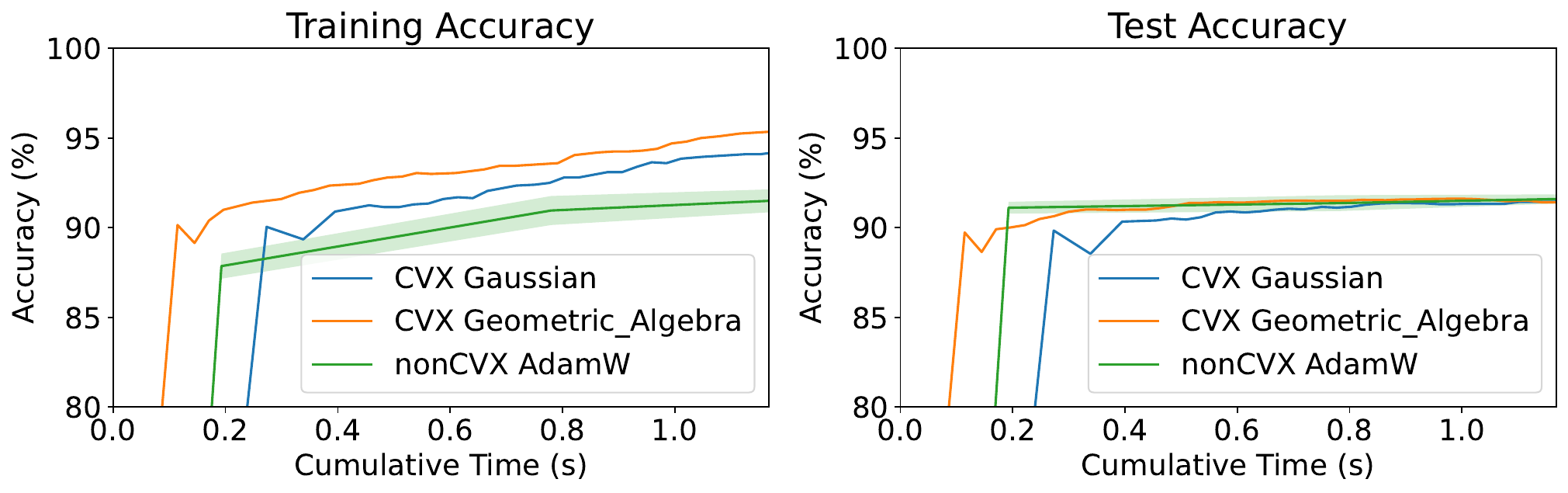}
\caption*{ECG-report.}
\end{minipage}
\begin{minipage}[t]{\figsizep\textwidth}
\centering
\includegraphics[width=\linewidth]{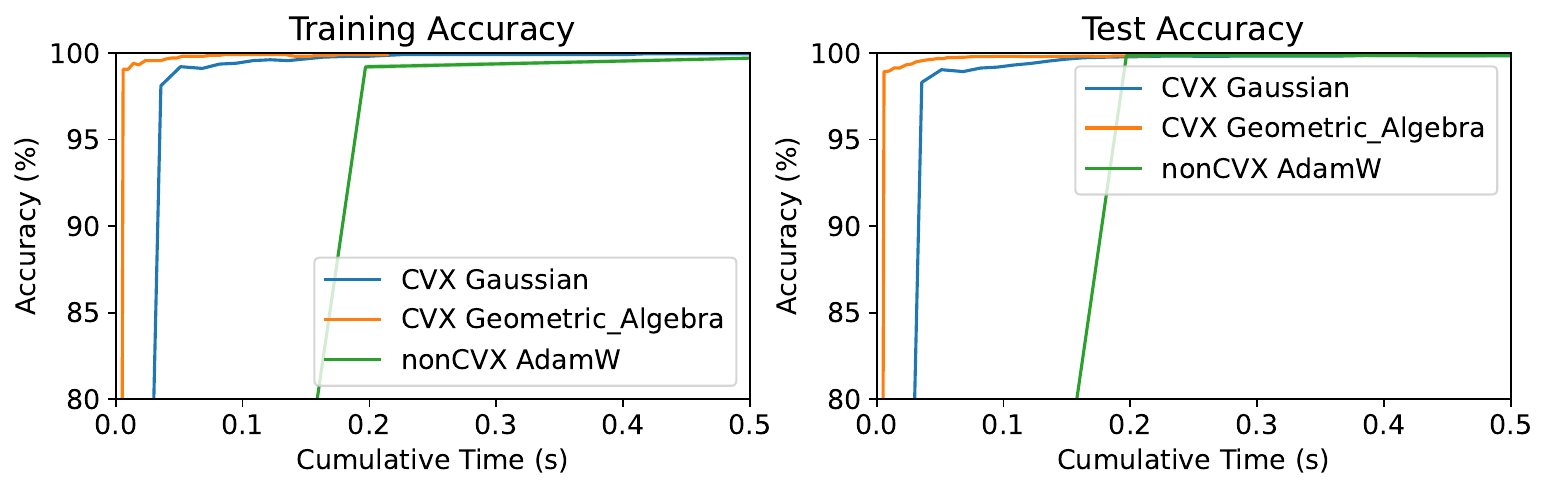}
\caption*{MNIST.}
\end{minipage}
\caption{Train/test accuracy with respect to cpu time. The shaded area represents the standard deviation across 5 independent trials.}
\end{figure}

Our results, indicated in orange, demonstrate the consistent and robust performance of the proposed approach. Our analysis reveals that models employing convex optimization not only perform well on speed but give also a decent accuracy boost. This characteristic is particularly notable if we are limited in terms of training data or computational power. The observed volatility in outcomes from non-convex optimization (AdamW), when varying the seeds, may be attributed to the convergence to different local minima or saddle points, as well as the influence of noise affecting the optimization process.


\section{Conclusion}

In this paper we introduce a new method to train convex neural networks based on geometric algebra. Inspired by the characterization of optimal weights, we propose a new randomized algorithm to sample hyperplane arrangement patterns of convex neural networks. Various experiments on transfer learning show that by obtaining patterns using the proposed method can improve both the training/ test accuracy and is more robust compared to non-convex counterparts. Our work could be extended to different transfer learning settings, and an initialization scheme based on Clifford Algebra may be effective.

\section*{Acknowledgements}
This work was supported in part by the National Science Foundation (NSF) under Grants ECCS-2037304 and DMS-2134248; in part by the NSF CAREER Award under Grant CCF-2236829; in part by the U.S. Army Research Office Early Career Award under Grant W911NF-21-1-0242; and in part by the Office of Naval Research under Grant N00014-24-1-2164.

\bibliography{Newton}
\bibliographystyle{apalike}

\newpage
\appendix
\onecolumn
\section{Proofs from \cref{sec:gann}}
\subsection{Proof of Lemma \ref{lem:path}}
\begin{proof}
For each regularization parameter $\beta>0$, the optimal solution of \eqref{train:noncvx} takes the form 
$$
f^\text{ReLU}_{\theta^*}(x)=\sum_{j=(j_1,\dots,j_{d-1})} z_j^*\kappa(x,x_{j_1},\dots,x_{j_{d-1}}).
$$
where $z^*$ is the optimal solution to \eqref{cvxnn:lasso}. This implies that it is sufficient to use the regularization path to \eqref{cvxnn:lasso} to characterize the regularization path of the optimal solution to \eqref{train:noncvx} with different regularization parameter $\beta$. Moreover, the regularization path to  \eqref{cvxnn:lasso} can be computed via the LARS algorithm \citep{tibshirani2013lasso}, and it will terminate with at most $3^q$ iteration. Here $q$ is the number of columns in the dictionary matrix $K$, which is upper bounded by $q\leq 2\binom{n}{d-1}$.
\end{proof}
\section{Proofs from \cref{section:sketch}}
\begin{proposition}
Let $\{j_i\}_{i=1}^{r-1}\subseteq [n]$ be a subset of $[n]$. Suppose that $v=S^T\times(Sx_{j_1},\dots,Sx_{j_{r-1}})$. Then, we have
\begin{equation}
    v^Tx_{j_i} = 0, \forall i\in[r-1].
\end{equation}
\end{proposition}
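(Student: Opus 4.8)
The plan is to reduce the claimed orthogonality in $\mathbb{R}^d$ to the defining orthogonality property of the generalized cross-product in the sketched space $\mathbb{R}^r$, and then to transport it back through the adjoint $S^T$. Write $\tilde v = \times(Sx_{j_1},\dots,Sx_{j_{r-1}}) \in \mathbb{R}^r$, so that by construction $v = S^T \tilde v$. The two ingredients I will use are (i) that $\tilde v$ is orthogonal to each of its generating vectors $Sx_{j_i}$, which is exactly the orthogonality built into the generalized cross-product from the Definition, and (ii) the elementary adjoint identity $(S^T \tilde v)^T x = \tilde v^T (Sx)$.

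For ingredient (i), I would invoke the determinant characterization of the cross-product. By the relation \eqref{equ:wedge} applied in $\mathbb{R}^r$, for any $w \in \mathbb{R}^r$ we have $w^T \times(Sx_{j_1},\dots,Sx_{j_{r-1}}) = \det[\,w,\ Sx_{j_1},\ \dots,\ Sx_{j_{r-1}}\,]$. Setting $w = Sx_{j_i}$ for a fixed $i \in [r-1]$ produces an $r\times r$ determinant whose first column coincides with its $(i+1)$-st column; a matrix with two equal columns is singular, so this determinant vanishes. Hence $(Sx_{j_i})^T \tilde v = 0$ for every $i \in [r-1]$. Combining this with ingredient (ii) gives
\[
v^T x_{j_i} = (S^T \tilde v)^T x_{j_i} = \tilde v^T (S x_{j_i}) = 0,
\]
which is exactly the claim.

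There is essentially no hard step here: the statement is a direct consequence of the cross-product's intrinsic orthogonality together with a single transpose manipulation, so the main work is just presenting it cleanly rather than surmounting any real obstacle. The one point that requires care is the dimensional bookkeeping — namely reading $S$ as a map $\mathbb{R}^d \to \mathbb{R}^r$ (so that each $Sx_{j_i}$ lies in $\mathbb{R}^r$ and the generalized cross-product of the $r-1$ sketched vectors is the well-defined $r$-dimensional operation from the Definition), and then confirming that $S^T$ returns a vector in $\mathbb{R}^d$ that can legitimately be paired with $x_{j_i}$. I would state this convention explicitly at the outset so that every operation is type-correct before carrying out the one-line computation above.
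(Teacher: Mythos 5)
Your proposal is correct and follows essentially the same route as the paper: both arguments set $\tilde v = \times(Sx_{j_1},\dots,Sx_{j_{r-1}})$, invoke the intrinsic orthogonality of the generalized cross-product to its arguments in the sketched space $\mathbb{R}^r$, and pull the result back via $v^Tx_{j_i} = \tilde v^T(Sx_{j_i})$. You simply make explicit (via the repeated-column determinant) the orthogonality property that the paper cites without proof, which is a reasonable amount of added detail.
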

\begin{proof}
Let $\tilde v = \times(Sx_{j_1},\dots,Sx_{j_{r-1}})$. From the property of generalized cross-product, we have 
\begin{equation}
    (Sx_{j_i})^T\tilde v = 0, \forall i\in[r-1].
\end{equation}
This proves \eqref{ortho1}. 
\end{proof}
\begin{proposition}
    Let $\{j_i\}_{i=1}^{r-1}\subseteq [n]$ be a subset of $[n]$ and $j_r\in [n]$. Suppose that $v=S^T\times(Sx_{j_1},\dots,Sx_{j_{r-1}})$. Assume that each row of $S$ follows the idential distribution. Then, we have
    \begin{equation}
        \mbE_S[v^Tx_{j_r}]=0.
    \end{equation}
\end{proposition}
\begin{proof}
Let $A=\bmbm{x_{j_1},\dots,x_{j_r}}$. From the property of the generalized cross-product, we note that
\begin{equation}
    x_{j_r}^Tv = (Sx_{j_r})^T\times(Sx_{j_1},\dots,Sx_{j_{r-1}}) = |SA|. 
\end{equation}
Let $\bar S$ be the matrix obtained from $S$ by exchanging the first two rows of $S$. From the assumption, $\bar S$ and $S$ follow the same distribution. Therefore, we can compute that
\begin{equation}
    \mbE_S[v^Tx_{j_r}]=\mbE_S[|SA|] = \mbE_{\bar S}[|\bar SA|] = -\mbE_S[|SA|].
\end{equation}
This implies that $\mbE_S[v^Tx_{j_r}]=\mbE_S[|SA|]=0$. 
\end{proof}

\section{Proofs from \cref{section:WhyGA}}\label{app:section:WhyGA}
\begin{theorem}
Let $D_1, D_2, \cdots, D_n, \bar{D}_1, \bar{D}_2, \cdots, \bar{D}_n$ be $2n$ possible activation patterns, and $\bar{D}$ denotes the complement of $D$. Suppose $X$ is a Gaussian random matrix. Then, for $i \in [n]$, the joint distribution of probabilities 
$$
\mathbb{P}_{u \sim \mathbb{S}^{1}} [1(Xu \geq 0) = D_i],
$$
as a random variable of $X$ is distributed as
$$
\frac{1}{2} \frac{E_i}{\sum_{i=1}^{n} E_i},
$$
where $E_1, E_2, \cdots E_n$ is a sequence of i.i.d. exponential random variables. Also, with probability at least $1 - e^{-20} - \exp(-Cn)$,
$$
\min_{i \in [n]} \mathbb{P}_{u \sim \mathbb{S}^{1}} [1(Xu \geq 0) = D_j] = O(\frac{1}{n^2}),
$$
for some $C > 0$.
\end{theorem}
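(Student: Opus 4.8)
The plan is to reduce everything to the classical theory of spacings of uniform points on a circle. Since $d=2$ and the pattern $1(Xu\ge 0)$ is invariant under positive scaling of each row, only the \emph{directions} of the rows of $X$ matter. Writing $x_i=\|x_i\|(\cos\theta_i,\sin\theta_i)$, Gaussianity gives that $\theta_1,\dots,\theta_n$ are i.i.d.\ uniform on $[0,2\pi)$. Parametrizing $u\in\mathbb{S}^1$ by its angle $\phi$, we have $x_i^Tu=\|x_i\|\cos(\phi-\theta_i)$, so the chamber boundary contributed by row $i$ is the line through the origin at angles $\theta_i\pm\pi/2$. These $n$ lines cut $\mathbb{S}^1$ into $2n$ arcs, one per activation pattern, and by antipodal symmetry the arcs come in $n$ congruent pairs $\{D_i,\bar D_i\}$. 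Consequently $\mathbb{P}_{u\sim\mathbb{S}^1}[1(Xu\ge 0)=D_i]$ is exactly the arc length of the $i$-th chamber divided by $2\pi$.

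First I would identify the distribution of these arc lengths. The undirected boundary lines are determined by the angles $\psi_i:=(\theta_i+\pi/2)\bmod\pi$; since $\theta_i$ is uniform on $[0,2\pi)$, each $\psi_i$ is uniform on $[0,\pi)$ and the $\psi_i$ are independent. Thus the boundaries are $n$ i.i.d.\ uniform points on a circle of circumference $\pi$, and the $n$ gaps $G_1,\dots,G_n$ between consecutive points are exactly the distinct chamber arc lengths (each appearing twice by antipodality), with $\sum_i G_i=\pi$. Invoking the standard spacings representation, namely that $n$ i.i.d.\ uniform points on a circle of circumference $L$ produce gaps distributed as $L\,(E_1,\dots,E_n)/\sum_j E_j$ with $E_j$ i.i.d.\ $\mathrm{Exp}(1)$, with $L=\pi$ gives $G_i=\pi E_i/\sum_j E_j$, hence $\mathbb{P}_u[\cdots=D_i]=G_i/(2\pi)=\tfrac12 E_i/\sum_j E_j$, which is the first claim.

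For the tail bound I would control numerator and denominator separately. The minimum of $n$ i.i.d.\ exponentials satisfies $\mathbb{P}[\min_j E_j>20/n]=e^{-20}$ exactly, so $\min_j E_j\le 20/n$ with probability at least $1-e^{-20}$; meanwhile $\sum_j E_j\sim\mathrm{Gamma}(n,1)$ concentrates about its mean $n$, and a Chernoff bound gives $\sum_j E_j\ge n/2$ with probability at least $1-\exp(-Cn)$ for some $C>0$. A union bound then yields, with probability at least $1-e^{-20}-\exp(-Cn)$,
$$
\min_{i\in[n]}\ \frac12\frac{E_i}{\sum_j E_j}\ \le\ \frac12\cdot\frac{20/n}{n/2}\ =\ \frac{20}{n^2},
$$
which is the desired $O(1/n^2)$ bound, with the precise constants matching the stated failure probability.

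The conceptual core, and the only step requiring genuine care, is the reduction to the spacings model: correctly accounting for the antipodal identification (which produces the factor $1/2$ and the circumference $\pi$ rather than $2\pi$), verifying that the boundary angles are genuinely i.i.d.\ uniform after reduction mod $\pi$, and noting that distinct rows give distinct lines almost surely so the chambers are nondegenerate. Once this dictionary is in place, both the exact distributional identity and the concentration estimate follow from textbook facts about uniform spacings and exponential tails, so I do not expect the probabilistic estimates themselves to be the bottleneck.
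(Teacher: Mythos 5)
Your proposal is correct and follows essentially the same route as the paper's proof: the identification of the chamber probabilities with uniform spacings on a circle (which the paper delegates to a citation of Devroye and the uniformity of the row angles, and which you work out explicitly, including the antipodal factor of $\tfrac12$), followed by the exact computation $\mathbb{P}[\min_j E_j \le 20/n] = 1 - e^{-20}$ and a Chernoff bound on $\sum_j E_j$ combined by a union bound. Your write-up is in fact more complete than the paper's, but the underlying argument is the same.
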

\begin{proof}
The first part of the proof follows directly from \cite{Devr86}, and from the fact that the clockwise angle (angle measured at clockwise orientation from the positive $x$ axis) is uniformly distributed on the interval $[0, 2\pi]$. The second part of the proof follows from the order statistics of exponential variables. We first know that
$$
\min_{i \in [n]} \mathbb{P}_{u \sim \mathbb{S}^{1}} [1(Xu \geq 0) = D_i] = \frac{1}{2} \frac{E_{(n)}}{\sum_{i=1}^{n} E_{(i)}},
$$
$E_{(n)} < E_{(n-1)} < \cdots E_{(1)}$ are the order statistics of $n$ i.i.d. samplings of Exp(1). Then,
$$
\mathbb{P}[E_{(n)} \leq \frac{20}{n}] = 1 - (e^{-20/n})^{n} = 1 - e^{-20}.
$$
Also, we know that from the concentration of exponential random variables, there exists a constant $C > 0$ that satisfies
$$
\mathbb{P}[0.5n \leq \sum_{i=1}^{n} E_i \leq 1.5n] \geq 1 - \exp(-Cn).
$$
In both cases, randomness comes from drawing $n$ i.i.d. samples from the exponential distribution, which is equivalent to drawing $2n$ chamber sizes from a uniform distribution of $n$ points. Combining the two high-probability bounds leads to the wanted result. 
\end{proof}

\begin{theorem}
Suppose no two rows of $X$ are parellel. Consider the following instantiation of Geometric Algebra sampling:\\
\qquad (1) sample $i \in [n]$, and randomly rotate it 90 degrees, clockwise or counterclockwise. Let $v$ the obtained vector.\\
\qquad (2) compute $D_i = diag(1(Xv \geq 0))$. \\
Then, we have
$$
\mathbb{P}[diag(1(Xv \geq 0) = D_j)] = \frac{1}{2n}. 
$$
for all $j \in [2n]$.
\end{theorem}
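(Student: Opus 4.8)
The plan is to reduce the claim to a clean counting/bijection statement on the circle $\mathbb{S}^1$. First I would enumerate the possible outputs of the sampler. Since the generalized cross-product $\times(x_i)$ in $\mathbb{G}^2$ is just the rotation of $x_i$ by $90^\circ$, step (1) produces $v\in\{Rx_i,\,-Rx_i\}$, where $R$ denotes the counterclockwise quarter-turn, and each of the $2n$ vectors $\{\pm Rx_i\}_{i\in[n]}$ is obtained with probability $\tfrac1n\cdot\tfrac12=\tfrac1{2n}$ (uniform choice of $i$, then uniform sign). Using the hypothesis that no two rows of $X$ are parallel — hence none antiparallel either — I would verify that these $2n$ directions are pairwise non-parallel, so they are genuinely $2n$ distinct points of $\mathbb{S}^1$. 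Geometrically, $\pm Rx_i$ are exactly the two antipodal directions lying on the line $L_i=\{s:x_i^\top s=0\}$, so the sampler outputs, uniformly, one of the $2n$ endpoints of the arcs cut out of $\mathbb{S}^1$ by the $n$ lines $\{L_i\}_{i\in[n]}$.

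It then suffices to show that $v\mapsto\diag(1(Xv\ge 0))$ carries these $2n$ directions bijectively onto the $2n$ activation patterns. I would parametrize $v=(\cos\phi,\sin\phi)$ and note that coordinate $k$ is active precisely when $\phi$ lies in the closed semicircle $A_k$ (of arc length $\pi$) centered at the angle of $x_k$, whose two endpoints are exactly the candidate directions $\pm Rx_k$. The $n$ lines partition $\mathbb{S}^1$ into $2n$ open arcs (chambers), and I would invoke the standard fact that distinct chambers of a central arrangement carry distinct sign patterns and that crossing a single boundary flips exactly one coordinate; this identifies the $2n$ chambers with the $2n$ patterns $D_1,\dots,D_n,\bar D_1,\dots,\bar D_n$.

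To finish, I would set up the bijection between the $2n$ boundary directions and the $2n$ chambers via a single consistent tie-breaking rule: read the pattern of $v$ as that of the chamber immediately counterclockwise of $v$ (equivalently, resolve the degenerate equality $x_i^\top v=0$ by an infinitesimal counterclockwise perturbation). Under this rule each boundary direction is sent to the unique chamber it bounds on its counterclockwise side; since the $2n$ endpoints and the $2n$ open arcs alternate around the circle, this assignment is a bijection onto the set of patterns. Hence every pattern has exactly one preimage among the sampler's outputs and is produced with probability $\tfrac1{2n}$.

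The hard part, and the step demanding the most care, is exactly this degeneracy: every sampled $v$ lands precisely on a boundary $x_i^\top v=0$, so the raw indicator $1(Xv\ge 0)$ is ambiguous in one coordinate. If one instead resolves all ties by the literal $\ge$ sign on every coordinate, the map fails to be a bijection — a two-row example already shows that one pattern is then hit twice while its complement is missed — so the uniform conclusion genuinely relies on committing to one fixed orientation for the tie. I would therefore state the tie-breaking convention explicitly at the outset and verify, using the cyclic order of the $2n$ endpoints, that consecutive endpoints map to consecutive chambers; this alternation is the heart of the argument and the place where the ``no two parallel rows'' hypothesis is used to guarantee the $2n$ endpoints are genuinely distinct.
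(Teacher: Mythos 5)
Your proposal follows the same route as the paper's proof: identify the $2n$ sampled directions $\{\pm R x_i\}_{i=1}^n$ with the $2n$ endpoints of the arcs that the central arrangement $\{v: x_i^\top v=0\}$ cuts out of $\mathbb{S}^1$, and exhibit a bijection between these endpoints and the $2n$ chambers. In that sense the two arguments are the same. However, the care you devote to the boundary degeneracy is not optional polish --- it is exactly the point the paper's proof skips. The paper asserts that if you rotate $u\in\mathcal{C}_D$ clockwise until you first meet some $u_0\in\{R_{\pm\pi/2}x_i\}$, then $1(Xu_0\geq 0)=D$ ``should hold.'' This is false when $u_0=R_{+\pi/2}x_i$: there the chamber counterclockwise of $u_0$ has $x_i^\top u<0$, yet $1(x_i^\top u_0\geq 0)=1(0\geq 0)=1$, so the pattern flips in coordinate $i$. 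Your two-row counterexample is correct and easy to make concrete: with $x_1=e_1$, $x_2=e_2$ the four sampled directions $(0,\pm1),(\mp1,0)$ produce the literal patterns $(1,1),(1,0),(0,1),(1,1)$, so $(1,1)$ is hit with probability $\tfrac12$ and $(0,0)$ is never hit. Hence the theorem as literally stated (with the raw indicator $1(\cdot\geq 0)$ applied to a vector lying exactly on a boundary) does not hold, and your fix --- declaring once and for all that a sampled boundary direction inherits the pattern of, say, the chamber on its counterclockwise side, and then using the alternation of endpoints and open arcs to get a bijection --- is what the argument actually requires. You should state that convention as part of the theorem's hypotheses (or as a remark that the sampler must perturb $v$ infinitesimally in a fixed orientation before evaluating the indicator); with it, your proof is complete and correct.
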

\begin{proof}
 We know that each chamber $\mathcal{C}_{D} := \{v\ |\ 1(Xv \geq 0) = D\}$ has a unique vector $u_D \in \{R_{\pi/2}(X_i), R_{-\pi/2}(X_i)\}_{i=1}^{n}$ that satisfies $1(Xu_D \geq 0) = D$. The reason is for some $u \in \mathcal{C}_D$, when we rotate $u$ clockwise until we meet a vector $u_0 \in \{R_{\pi/2}(X_i), R_{-\pi/2}(X_i)\}_{i=1}^{n}$, $1(Xu_0 \geq 0) = D$ should hold. Hence, there exists a one-to-one correspondence between the outer products $\{R_{\pi/2}(X_i), R_{-\pi/2}(X_i)\}_{i=1}^{n}$ and the activation patterns. As the geometric algebra sampling samples a vector from the outer products uniformly, we sample each chamber uniformly.
\end{proof}

\section{Geometric Algebra sampling for Neural Networks with Bias}
For neural network models with a bias term $f^\text{ReLU}_{\theta,b}(X)$, the convex optimization formulation of the neural network training problem follows:
\begin{equation}\label{cvxnn:relu_bias}
\begin{aligned}
    \min_{\{(u_i,u_i',b_i,b_i')\}_{i=1}^q}&\; \ell\pp{\sum_{i=1}^qD_i^\text{b}X(u_i-u_i'),y}
    +\beta\sum_{i=1}^q(\|u_i\|_2+\|u_i'\|_2)\\
    \text{ s.t. }&(2D_i^\text{b}-I)(Xu_i+bi\bone)\geq 0, \\
    &(2D_i^\text{b}-I)(Xu_i'+b_i'\bone)\geq 0,i\in[q].
\end{aligned}
\end{equation}
Here $D_1^\text{b},\dots,D_q^\text{b}$ are enumeration of all possible hyperplane arrangements $\{\diag(\mbI(Xw+b\bone\geq 0))|w\in\mbR^d,b\in\mbR\}$. For the gated ReLU neural networks with the bias term, the convex optimization formulation takes the form:
\begin{equation}\label{cvxnn:grelu_bias}
\begin{aligned}
    \min_{\{(u_i,b_i)\}_{i=1}^q}&\; \ell\pp{\sum_{i=1}^qD_i^\text{b}(Xu_i+b_i\bone),y}+\beta\sum_{i=1}^q(\|u_i\|_2).\\
\end{aligned}
\end{equation}
To efficiently approximate the solution of \ref{cvxnn:grelu_bias}, for Gaussian sampling, we subsample $\bar D_i^\text{b} = \diag(\mbI(Xv_i+b_i\bone\geq 0))$ for $i\in[k]$, where $v_1,\dots,v_k$ are i.i.d. random vectors following $\mcN(0,I)$ and $b_1,\dots,b_k$ are i.i.d. random variables following $\mcN(0,1)$. 
\begin{equation}\label{cvxnn:grelu_sample_bias}
    \min_{\{(u_i,b_i)\}_{i=1}^k}\; \ell\pp{\sum_{i=1}^k \bar D_i^\text{b}Xu_i+b_i\bone,y}+\beta\sum_{i=1}^k\|u_i\|_2.
\end{equation}


\begin{algorithm}[!htp]
\caption{Convex neural network training with the bias term via Gaussian sampling}
\label{alg:Gaussian_bias}
\begin{algorithmic}[1]
\REQUIRE Number of hyperplane arrangement samples $k$, regularization parameter $\beta>0$.
\STATE Sample $k$ i.i.d. random vectors $v_1,\dots,v_k$ following $\mcN(0,I)$. Sample $k$ i.i.d. random variables $b_1,\dots,b_k$ following $\mcN(0,1)$. 
\STATE Compute $\bar D_i^\text{b} = \diag(\mbI(Xv_i+b_i\geq 0))$ for $i\in[k]$.
\STATE Solve the convex optimizatgion problem \eqref{cvxnn:grelu_sample_bias}.
\end{algorithmic}
\end{algorithm}

\begin{algorithm}[!t]
\caption{Convex neural network training with the bias term via randomized Geometric Algebra}
\label{alg:GA_bias}
\begin{algorithmic}[1]
\REQUIRE Number of hyperplane arrangement samples $k$, regularization parameter $\beta>0$, sketching matrix $S\in\mbR^{m\times d}$.
\FOR{$i=1,\dots,k$}
\STATE Sample $\{j_i\}_{i=1}^{r}$ from $[n]$.
\STATE Compute $\tilde v$ and $b$ via \eqref{optimal:sketch_vb} and let $v=S^T\tilde v$.
\STATE Compute $\bar D_i = \diag(\mbI(X v+b\bone \geq 0))$.
\ENDFOR
\STATE Solve the convex optimization problem \eqref{cvxnn:grelu_sample_bias}.
\end{algorithmic}
\end{algorithm}

From \cite{pilanci2023complexity}, the optimal neurons are given by
\begin{equation}\label{optimal:vb}
    v=\times(x_{j_1}-x_{j_d},\dots,x_{j_{d-1}}-x_{j_d}), b = -v^Tx_{j,d},
\end{equation}
where $\{j_i\}_{i=1}^d$ is a subset of $[n]$. Therefore, for Geometric algebra sampling, we can sample a size-$d$ subset of $[n]$ and compute $D^b=\diag(\mbI(Xv+b\bone\geq 0))$ with $v,b$ computed in \eqref{optimal:vb}. 

To apply the randomized embeddings for neural network models with a bias term $f^\text{ReLU}_{\theta,b}(X)$, we can compute the optimal neuron for the projected data as follows:
\begin{equation}\label{optimal:sketch_vb}
    \tilde v=\times(S(x_{j_1}-x_{j_r}),\dots,S(x_{j_{r-1}}-x_{j_r})), b = -v^TSx_{j,d}.
\end{equation}
Then, we can embed $\tilde v\in\mbR^r$ to $\mbR^d$ by $v=S^T\tilde v$. Then, we can compute $D=\diag(\mbI(Xv+b\geq 0))$ as a hyperplane arrangement. The overall algorithm is summarized in Algorithm \ref{alg:GA_bias}.

\end{document}